\theoremstyle{plain}
\newtheorem{theorem}{Theorem}
\theoremstyle{definition}
\newtheorem{definition}{Definition}
\newtheorem{assumption}{Assumption}
\theoremstyle{remark}
\newtheorem{proposition}{Proposition}
\newtheorem{remark}{Remark}
\newtheorem{exam}{Example}
\newcommand{\argmax}{\mathop{\rm arg~max}\limits}
\title{Prediction Algorithms Achieving Bayesian Decision Theoretical Optimality Based on Decision Trees as Data Observation Processes}
\author{%
  Yuta Nakahara\\
  Center for Data Science\\
  Waseda University\\
  Tokyo, Japan\\
  \texttt{y.nakahara@waseda.jp}\\
  \And
  Shota Saito\\
  Faculty of Informatics\\ 
  Gunma University\\
  Gunma, Japan\\
  \texttt{shota.s@gunma-u.ac.jp}\\
  \And
  Naoki Ichijo\\
  Dept. of Pure and Applied Math.\\
  Waseda University\\
  Tokyo, Japan\\
  \texttt{1jonao@fuji.waseda.jp} \\
  \And
  Koki Kazama\\
  Dept. of Pure and Applied Math.\\
  Waseda University\\
  Tokyo, Japan\\
  \texttt{kokikazama@aoni.waseda.jp}\\
  \And
  Toshiyasu Matsushima\\
  Dept. of Pure and Applied Math.\\
  Waseda University\\
  Tokyo, Japan\\
  \texttt{toshimat@waseda.jp}\\
}
\begin{document}

\maketitle

\begin{abstract}
  In the field of decision trees, most previous studies have difficulty ensuring the statistical optimality of a prediction of new data and suffer from overfitting because trees are usually used only to represent prediction functions to be constructed from given data. In contrast, some studies, including this paper, used the trees to represent stochastic data observation processes behind given data. Moreover, they derived the statistically optimal prediction, which is robust against overfitting, based on the Bayesian decision theory by assuming a prior distribution for the trees. However, these studies still have a problem in computing this Bayes optimal prediction because it involves an infeasible summation for all division patterns of a feature space, which is represented by the trees and some parameters. In particular, an open problem is a summation with respect to combinations of division axes, i.e., the assignment of features to inner nodes of the tree. We solve this by a Markov chain Monte Carlo method, whose step size is adaptively tuned according to a posterior distribution for the trees.
\end{abstract}

%%%%%%%%%%%%%%%%%%%%%%%%%%%%%%%%
\section{Introduction}\label{intro}

Studies of decision trees have been primarily focused on the problem of overfitting, i.e., improvement of prediction accuracy for new data. It is because, while decision trees can be easily constructed to perfectly fit the training data by selecting appropriate features and increasing the tree's depth, such models often show poor performance when predicting new data. Therefore, various methods have been proposed to improve the accuracy of predicting new data, e.g., pruning (e.g., \cite{CART}), ensemble learning (e.g., \cite{RF, GradientBoost}), and introducing regularization terms in the cost function (e.g., \cite{XGBoost}). These methods have been successfully applied to various prediction problems on real-world data.

Nonetheless, it is challenging to prove the prediction made by these methods is statistically optimal or to discuss its room for improvement. In our opinion, a critical cause of this is that most prior studies have used trees solely to represent the prediction function (or hypotheses) to be constructed from given data and have not used to represent a stochastic data observation process behind the given data. Herein, we call such trees representing the predictive function \emph{function trees}. In principle, it is impossible to directly minimize a prediction error between the new data point and a predicted value without any assumption that both the training data and the new data point are observed according to a similar stochastic mechanism. More specifically, it is impossible to consider the optimal prediction based on the statistical decision theory (see, e.g., \cite{Berger}) without any assumption of stochastic data observation processes.  

In contrast, few studies utilized trees to represent stochastic data observation processes \cite{suko_alg, MTRF}. We call such trees representing the data observation processes \emph{model trees} herein. A model tree represents a division pattern of the feature space, i.e., the shape of the model tree represents the number of divisions and the features assigned to the inner nodes represent the division axes, in a similar manner to usual function trees. However, unlike the function trees, stochastic models are assigned to the leaf nodes of the model tree, and we assume the objective variables are observed according to them. Although the shape of the model tree, the features assigned to the inner nodes, and the parameters of the stochastic models on the leaf nodes are usually unobservable, assuming prior distributions on all these amounts and applying the Bayesian decision theory (see, e.g., \cite{Berger}), \cite{suko_alg, MTRF} provided a framework to directly minimize the expectation of the prediction error for new data instead of any cost function on the training data so that the overfitting could be avoided. Moreover, they theoretically derived the formula strictly minimizing that. The prediction made by this formula is called the Bayes optimal prediction.

However, this formula involves expectations for the tree's shape, the features on the inner nodes, and the parameters of stochastic models on the leaf nodes under their posterior distributions. Although an algorithm to efficiently and exactly calculate the expectations for the tree's shape and the parameters on the leaf nodes are proposed in \cite{suko_alg, MTRF}, the expectation for the features on the inner nodes is still an open problem. Although an approximative method was proposed in \cite{MTRF}, it loses the Bayes optimality.

Therefore, we solve this by a Markov chain Monte Carlo (MCMC) method (see, e.g., \cite{Bishop}) and propose an algorithm to predict new data. Our method retains the Bayes optimality after sufficient MCMC iterations. Therefore, it is the first prediction algorithm achieving the direct minimization of the expectation of the prediction error for new data instead of any cost function on the training data, which cannot be achieved by any function tree based approaches mentioned in the first paragraph. In our method, we adaptively tune the step size of the MCMC method according to the posterior distribution for trees. We confirm its effectiveness by numerical experiments. As a result, our method showed better prediction performance than some state-of-the-art methods \cite{XGBoost, lightGBM}.

As another related work, the first MCMC method for the model trees was reported by \cite{BayesianCART}. Although it seems not to be motivated by the Bayesian decision theoretical optimality but rather a stochastic search of the decision trees, it is able to be immediately applied to the Bayes optimal decision because it enables us to calculate the expectation for the model trees. However, this method approximates the expectations for both the features on the inner nodes and the shape of the tree by the Metropolis-Hastings (MH) method (see, e.g. \cite{Bishop}). In contrast, our method exactly calculates the expectation for the shape of the tree and approximates only the expectation for the features in the inner nodes. Therefore, our method is also regarded as an idea to accelerate or leverage the MCMC method of \cite{BayesianCART}. An effect from this perspective will be demonstrated in numerical experiments. \cite{BayesianCART} is further extended to a model called BART in \cite{BART}, in which data are observed according to a sum of multiple model trees. Therefore, our model will be extended in a similar manner. However, we focus on the single tree model in this paper.
%Another study that utilizes the tree to represent the stochastic data observation process is the hierarchical mixtures of experts (HME) \cite{HME}. However, our method is different from the HME since the tree is given and fixed in the HME.

%%%%%%%%%%%%%%%%%%%%%%%%%%%%%%%%
\section{Preliminaries}

\subsection{Basic Notations}

Let the dimension of the continuous features be $p \in \mathbb{N} \coloneqq \{1,2, 3, \ldots \}$. Let the dimension of the binary features be $q \in \mathbb{N}$. Let $\bm x = (x_1, x_2, \dots , x_p, x_{p+1}, \dots , x_{p+q}) \in \mathbb{R}^p \times \{0,1\}^q \subset \mathbb{R}^{p+q}$ be an explanatory variable, where $x_1, \dots , x_p$ take continuous values and $x_{p+1}, \dots , x_{p+q}$ take binary values.
Also, $\mathcal{Y}$ denotes a set of possible values of objective variables.
Our discussion can be applied to both a discrete set (e.g., $\mathcal{Y}=\{0, 1\}$) and a continuous set (e.g., $\mathcal{Y}=\mathbb{R}$).
Let $Y$ be a random variable taking values in $\mathcal{Y}$ and $y \in \mathcal{Y}$ be a realization of $Y$.

\clearpage

\begin{wrapfigure}[13]{r}[0pt]{0.4\textwidth}
\centering
\includegraphics[width=\linewidth]{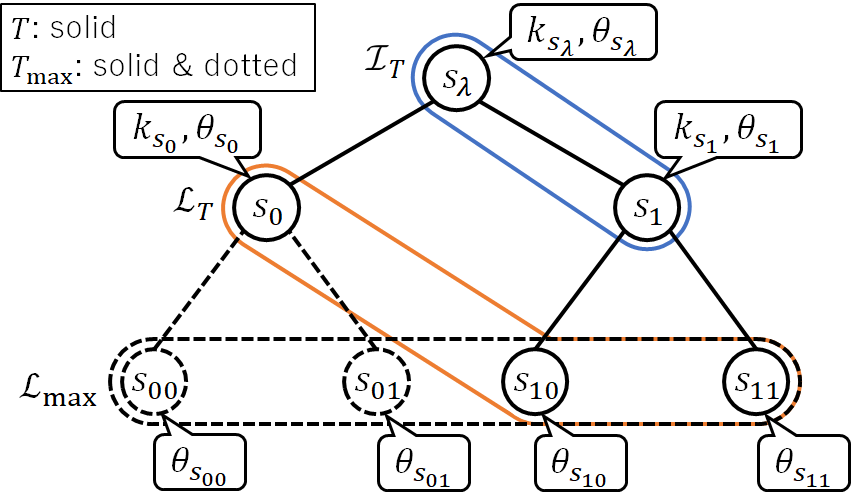}
\caption{The basic notations for a binary tree. Here, $D_\mathrm{max}$ is 2.}
\label{basic_notation}
\end{wrapfigure}

Regarding a tree, we use the following notations. See also Fig.\ \ref{basic_notation}.
Let $D_\mathrm{max} \in \mathbb{N}$ be the maximum depth of trees.
The perfect\footnote{All inner nodes have exactly two children and all leaf nodes have the same depth.} binary tree whose depth is $D_\mathrm{max}$ is denoted by $T_\mathrm{max}$.
The set of all nodes of $T_\mathrm{max}$ is denoted by $\mathcal{S}_\mathrm{max}$.
The set $\mathcal{S}_\mathrm{max}$ can be divided into two disjoint subsets: $\mathcal{L}_\mathrm{max} \subset \mathcal{S}_\mathrm{max}$ and $\mathcal{I}_\mathrm{max} \subset \mathcal{S}_\mathrm{max}$, where $\mathcal{L}_\mathrm{max}$ is the set of the leaf nodes of $T_\mathrm{max}$ and $\mathcal{I}_\mathrm{max}$ is the set of the inner nodes of $T_\mathrm{max}$.
In this paper, we consider a rooted tree, i.e., a tree that has a root node $s_\lambda \in \mathcal{S}_\mathrm{max}$.
Let $T$ be a full (also called proper) subtree of $T_\mathrm{max}$, where $T$'s root node is $s_\lambda$ and all inner nodes have exactly two children.
The set of all nodes of $T$ is denoted by $\mathcal{S}_T \subset \mathcal{S}_\mathrm{max}$. It can be divided into $\mathcal{L}_T \subset \mathcal{S}_T$ and $\mathcal{I}_T \subset \mathcal{S}_T$, where $\mathcal{L}_T$ is the set of the leaf nodes of $T$ and $\mathcal{I}_T$ is the set of the inner nodes of $T$.
The set of all full subtrees $T$ is denoted by $\mathcal{T}$.
As we will describe later in detail, a feature index $k_s \in \{ 1, 2, \dots , p+q \}$ is assigned to an inner node $s \in \mathcal{I}_\mathrm{max}$, and a feature assignment vector is denoted by $\bm k \coloneqq (k_s)_{s \in \mathcal{I}_\mathrm{max}} \in \mathcal{K} \coloneqq \{ 1, 2, \dots , p+q \}^{|\mathcal{I}_\mathrm{max}|}$.
Also as we will describe later in detail, a node $s \in \mathcal{S}_\mathrm{max}$ has a parameter $\theta_s$.
We use the notation $\bm \theta \coloneqq (\theta_s)_{s \in \mathcal{S}_\mathrm{max}}$.
The set of $\bm \theta$ is denoted by $\bm \Theta$.

\subsection{Stochastic Data Observation Process}

\begin{wrapfigure}[26]{r}[0pt]{0.5\textwidth}
\centering
\includegraphics[width=\linewidth]{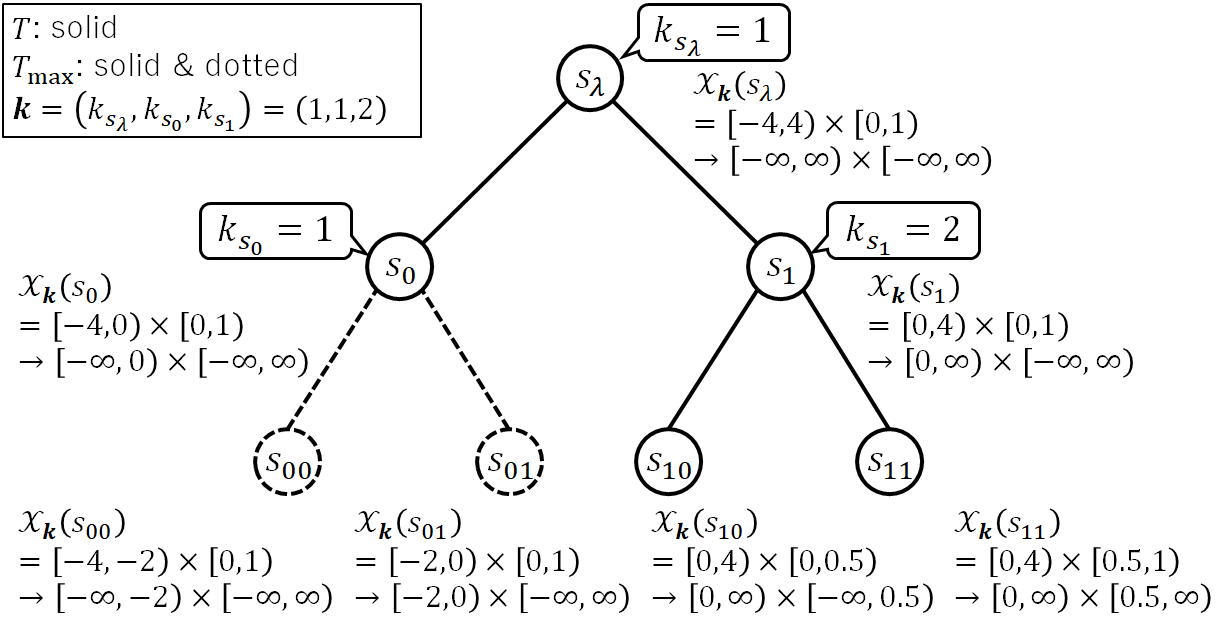}
\caption{An example of subspace division procedure. Here, $D_\mathrm{max} = 2$, $p = 1$, $q=1$, and $\bm k = (k_{s_\lambda}, k_{s_0}, k_{s_1}) = (1, 1, 2)$. First, the root node $s_\lambda$ has a subspace $\mathcal{X}_{\bm k}(s_\lambda) = [a_{1,s_\lambda},b_{1,s_\lambda}) \times [a_{2,s_\lambda}, b_{2,s_\lambda}) = [-4,4) \times [0,1)$. Next, $\mathcal{X}_{\bm k}(s_\lambda)$ is divided into $\mathcal{X}_{\bm k}(s_0)$ and $\mathcal{X}_{\bm k}(s_1)$. Its threshold is a midpoint of $a_{1,s_\lambda}$ and $b_{1,s_\lambda}$ because $k_{s_\lambda} = 1$. Similarly, $\mathcal{X}_{\bm k}(s_0)$ and $\mathcal{X}_{\bm k}(s_1)$ are divided into $\mathcal{X}_{\bm k}(s_{00})$, $\mathcal{X}_{\bm k}(s_{01})$, $\mathcal{X}_{\bm k}(s_{10})$, and $\mathcal{X}_{\bm k}(s_{11})$. After that, temporary minimum and maximum values are replaced with $-\infty$ and $\infty$, respectively. As a result, $\bigcup_{s \in \mathcal{L}_T} \mathcal{X}_{\bm k}(s) = \mathbb{R}^{p+q}$ holds for any $T \in \mathcal{T}$, e.g., if $T$ is a tree represented with solid lines, then $\mathcal{X}_{\bm k}(s_0) \cup \mathcal{X}_{\bm k}(s_{10}) \cup \mathcal{X}_{\bm k}(s_{11}) = \mathbb{R}^2$ since $\mathcal{L}_T = \{ s_0, s_{10}, s_{11} \}$.}
\label{generative_model}
\end{wrapfigure}

We assume the following probability distribution on an objective variable $y$ given an explanatory variable $\bm x$. Note that $\bm \theta$, $T$, and $\bm k$ are unobservable parameters and their posterior should be calculated later in a Bayesian manner. First, we define the following subspace division procedure and a leaf node corresponding to a given explanatory variable. Note that this procedure is not a tree construction method from given data but a definition of stochastic data observation process behind the given data.

\begin{definition}[$\mathcal{X}_{\bm k}(s)$ and $s_{\bm k, T}(\bm x)$]\label{subspace}
Given $\bm k$, let $\mathcal{X}_{\bm k}(s)$ for $s \in \mathcal{S}_\mathrm{max}$ denote a subspace of $\mathbb{R}^{p+q}$, which is recursively defined in the following manner (see also Fig.\ \ref{generative_model}).

First, for the root node $s_\lambda$, we assume 
\begin{align}
    \mathcal{X}_{\bm k} (s_\lambda) = [a_{1,s_\lambda},&b_{1,s_\lambda}) \times \cdots \nonumber \\
    & \times [a_{p+q,s_\lambda},b_{p+q,s_\lambda}).
\end{align}
Here, $a_{k,s_\lambda}, b_{k,s_\lambda} \in \mathbb{R}$ are just initial values to determine thresholds and they do not restrict the acceptable range of the features. At this point, we use temporary minimum and maximum values for continuous features and $a_{k,s_\lambda}=0$ and $b_{k,s_\lambda}=1$ for binary features.

Next, if the following holds for any inner node $s \in \mathcal{I}_\mathrm{max}$,
\begin{align}
    \mathcal{X}_{\bm k} (s) = [a_{1,s},b_{1,s}) \times \cdots \times [a_{p+q,s},b_{p+q,s}),
\end{align}
then the subspace assigned to the left child $s_l$ and the right child $s_r$ of $s$ is defined as follows, based on the feature index $k_s$ assigned to $s$.
\begin{align}
    &\mathcal{X}_{\bm k}(s_l) = \{ \bm x \in \mathcal{X}_{\bm k}(s) \mid a_{k_s, s} \leq x_{k_s} < (a_{k_s,s} + b_{k_s, s}) / 2 \}, \\ 
    &\mathcal{X}_{\bm k}(s_r) = \{ \bm x \in \mathcal{X}_{\bm k}(s) \mid (a_{k_s,s} + b_{k_s, s}) / 2 \leq x_{k_s} < b_{k_s, s} \}.
\end{align}
In other words, the threshold is deterministically placed at a midpoint of the assigned subspace.

% If $p+1 \leq k_s \leq q$, then
% \begin{align}
%     \mathcal{X}_{\bm k}(s_l) = \{ \bm x \in \mathcal{X}_{\bm k}(s) \mid x_{k_s} = 0 \}, \\ 
%     \mathcal{X}_{\bm k}(s_r) = \{ \bm x \in \mathcal{X}_{\bm k}(s) \mid x_{k_s} = 1 \}.
% \end{align}

Lastly, we replace $a_{k,s}$ with $-\infty$ for any $k \in \{ 1, \dots , p+q \}$ and $s \in \mathcal{S}_\mathrm{max}$ such that $a_{k,s} = a_{k,s_\lambda}$. Similarly, we replace $b_{k,s}$ with $\infty$ for any $k \in \{ 1, \dots , p+q \}$ and $s \in \mathcal{S}_\mathrm{max}$ such that $b_{k,s} = b_{k,s_\lambda}$.

By this procedure, each $s \in \mathcal{S}_\mathrm{max}$ is assigned to a subspace of $\mathbb{R}^{p+q}$ and the following holds: for any $T \in \mathcal{T}$, $\bigcup_{s \in \mathcal{L}_T} \mathcal{X}_{\bm k}(s) = \mathbb{R}^{p+q}$, and for any $s, s' \in \mathcal{L}_T$, $s \neq s' \Rightarrow \mathcal{X}_{\bm k}(s) \cap \mathcal{X}_{\bm k}(s') = \emptyset$. Therefore, given $\bm k$ and $T$, we can uniquely determine a node $s \in \mathcal{L}_T$ such that $\bm x \in \mathcal{X}_{\bm k}(s)$, for any $\bm x \in \mathbb{R}^{p+q}$. Let $s_{\bm k, T}(\bm x)$ represents this node.
\end{definition}

Using the above notation, we impose the following assumptions on the probability distribution of an objective variable $y$ given an explanatory variable $\bm x$.

\begin{assumption} \label{Distribution_y}
Given $\bm k$ and $T$, let $s_{\bm k, T} (\bm x) \in \mathcal{L}_T$ denote the leaf node defined in Def.\ \ref{subspace}, which is uniquely and deterministically obtained from the explanatory variable $\bm x$. Then, we assume
\begin{align}
p(y | \bm x, \bm \theta, T, \bm k) = p(y | \theta_{s_{\bm k, T}(\bm x)}).\label{eq_gen_model}
\end{align}
That is, we assume that $y$ is independent of any other parameter than that assigned to $s_{\bm k, T}(\bm x)$.
\end{assumption}

\begin{assumption} \label{Distribution_theta}
We assume the prior distribution on $\bm \theta$ has the following form: $p(\bm \theta) = \prod_{s \in \mathcal{S}_\mathrm{max}} p(\theta_s)$.
In addition, we assume each prior $p(\theta_s)$ is a conjugate prior for $p(y | \theta_s)$ and we can calculate its predictive distribution $p(y) = \int p(y | \theta_s) p(\theta_s) \mathrm{d}\theta_s$ with an acceptable cost.
\end{assumption}

The following examples fulfill the above assumptions.

\begin{exam} \label{Example_y}
For example, when $\mathcal{Y}$ is finite, we can assume the categorical distribution $\mathrm{Cat}(y | \bm \pi_s)$ and the Dirichlet prior $\mathrm{Dir}(\bm \pi_s | \bm \alpha )$.
When $y$ is a count data, i.e., $\mathcal{Y} = \{0, 1, \dots \}$, we can assume the Poisson distribution $\mathrm{Po}(y | \nu_s)$ and the gamma prior $\mathrm{Gam}(\nu_s | \alpha, \beta)$.
When $\mathcal{Y}$ is continuous, we can assume the normal distribution $\mathcal{N}(y | \mu_s, \sigma_s^2)$ and the normal-gamma prior $\mathcal{N}(\mu_s | m, \gamma \sigma_s^2) \mathrm{Gam}(1 / \sigma_s^2 | \alpha, \beta)$. Further, we can also assume a more complicated model, e.g., linear regression (LR) model $\mathcal{N}(y | \bm w_s^\top \bm x, \sigma_s^2)$ and the normal-gamma prior $\mathcal{N}(\bm w_s | \bm m, \bm \Lambda / \sigma_s^2) \mathrm{Gam}(1 / \sigma_s^2 | \alpha, \beta)$, as long as it satisfies Assumption \ref{Distribution_theta}. This flexibility is one of advantages of our model.
\end{exam}

\begin{figure*}[tbp]
\centering
\includegraphics[width=0.9\linewidth]{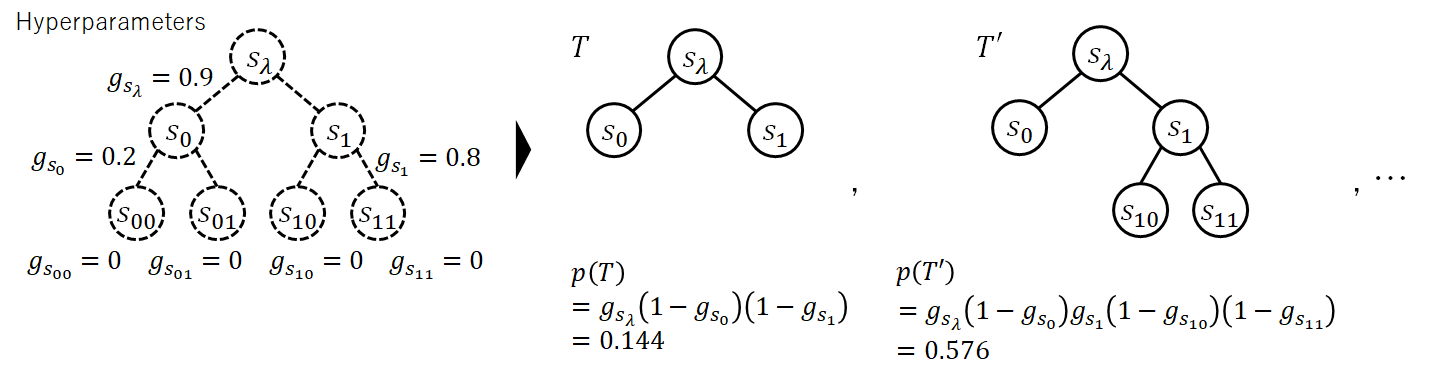}
\caption{An example of the prior distribution on $T \in \mathcal{T}$. Its hyperparameters are given as shown in the left. $s_\lambda$ becomes an inner node with probability $g_{s_\lambda} = 0.9$. Therefore, the data observation process includes $x_{k_{s_\lambda}}$ with probability $g_{s_\lambda} = 0.9$. Similarly, $s_1$ becomes an inner node and the data observation process includes $x_{k_{s_1}}$ with probability $g_{s_\lambda}g_{s_1} = 0.9 \cdot 0.8$ (see also Remark 2 of \cite{full_rooted_trees}).}
\label{tree_distribution}
\end{figure*}

We assume the following prior distribution of $T \in \mathcal{T}$, which has been used in \cite{CT, suko_alg, MTRF, full_rooted_trees}.
\begin{assumption} \label{Distribution_T}
Given $D_\mathrm{max}$, we assume the following probability distribution on the set $\mathcal{T}$ of full trees $T$, which are subtrees of the perfect binary tree $T_\mathrm{max}$ whose depth is $D_\mathrm{max}$:
\begin{align}
p(T) \coloneqq \prod_{s \in \mathcal{I}_T} g_s \prod_{s' \in \mathcal{L}_T} (1-g_{s'}), \label{DistributionTree}
\end{align}
where $g_s \in [0, 1]$ is a given hyperparameter representing an edge spreading probability of a node $s$. For $s \in \mathcal{L}_\mathrm{max}$, we assume $g_s = 0$.
\end{assumption}
Properties of this distribution are discussed in \cite{full_rooted_trees}, e.g., Eq.\ \eqref{DistributionTree} satisfies $\sum_{T \in \mathcal{T}} p(T) = 1$.

\begin{exam}
Figure \ref{tree_distribution} shows an example of $p(T)$. The hyperparameter $g_s$ represents the edge spreading probability under the condition that all the ancestor nodes of $s$ extend their edges. In other words, the data observation process includes the explanatory variable $x_{k_s}$ with the prior probability $g_s$ under the condition that it includes all the explanatory variables assigned to the ancestor nodes of $s$ (see also Remark 2 of \cite{full_rooted_trees}). Therefore, the prior probability that an explanatory variable on a node is included in the model decreases exponentially with its depth. In the learning phase, this property of the prior distribution prevents overfitting.
\end{exam}

Lastly, the prior distribution of $\bm k$ is as follows.
\begin{assumption} \label{Distribution_k}
We assume that $k_s$ is independently assigned to each $s \in \mathcal{I}_\mathrm{max}$ with probability $1/(p+q)$, that is $p(\bm k )$ is the uniform distribution on $\mathcal{K}$.
\end{assumption}
In other words, each feature can be assigned multiple times on a path from the root node to a leaf node. We assume this for simplicity, and we can also restrict the number of feature assignments.

\subsection{Problem Setup} \label{setup}

We deal with a prediction problem of a new objective variable $y_{n+1} \in \mathcal{Y}$ corresponding to an explanatory variable $\bm x_{n+1} \in \mathbb{R}^p \times \{0, 1\}^q$ from given training data $(\bm x^n, y^n) \coloneqq \{ (\bm x_i, y_i) \}_{i \in \{ 1, 2, \dots n \}} \in (\mathbb{R}^p \times \{0, 1\}^q \times \mathcal{Y})^n$, where $n \in \mathbb{N}$ is the sample size and we assume $y_i$ independently follows \eqref{eq_gen_model} given $\bm x_i$. We assume we know the maximum depth $D_\mathrm{max}$ of the trees, the initial range $[a_{k,s_\lambda}, b_{k,s_\lambda})$ of the subspace for any $k \in \mathcal{K}$ (see Definition \ref{subspace}), and the hyperparameters $(g_s)_{s \in \mathcal{S}_\mathrm{max}}$ of $p(T)$ (see Assumption \ref{Distribution_T}), while $\bm \theta$, $T$, and $\bm k$ are unknown. We also make Assumptions \ref{Distribution_y}, \ref{Distribution_theta}, \ref{Distribution_T}, and \ref{Distribution_k}.

\subsection{Bayes Optimal Prediction for New Data Point}
We formulate the optimal prediction under the Bayesian decision theory (see, e.g., \cite{Berger}).
As we have described, we would like to predict the value of the new objective variable $y_{n+1}$ corresponding to $\bm x_{n+1}$ given training data $(\bm x^n, y^n)$.
Hence, the decision function $\delta$, which outputs a predicted value, is defined as $\delta: \mathbb{R}^p \times \{0, 1\}^q \times (\mathbb{R}^p \times \{0, 1\}^q \times \mathcal{Y})^n \to \mathcal{Y}$, and the Bayes risk function $\mathrm{BR}(\delta)$ based on the 0-1 loss $\ell_{0-1}(\delta (\bm x_{n+1}, \bm x^n, y^n), y_{n+1})$ is defined as follows:
\begin{align}
\mathrm{BR}(\delta) \coloneqq \sum_{\bm k \in \mathcal{K}} \sum_{T \in \mathcal{T}} \int_{\bm \Theta} p(\bm k, T, \bm \theta) \int_{\mathcal{Y}^n} p(y^n | & \bm x^n, \bm \theta, T, \bm k) \int_{\mathcal{Y}} p(y_{n+1} | \bm x_{n+1}, \bm \theta, T, \bm k) \nonumber \\
&\times \ell_{0-1}(\delta (\bm x_{n+1}, \bm x^n, y^n), y_{n+1}) \mathrm{d}y_{n+1} \mathrm{d}y^n \mathrm{d}\bm \theta. \label{BayesRisk}
\end{align}
When $\mathcal{Y}$ is a finite set, the integral with respect to $y$ in \eqref{BayesRisk} is replaced by the summation.
Note that we can also assume other usual loss functions in the Bayes decision theory, e.g., the squared loss.\footnote{Herein, we regard the explanatory variables $\bm x^n$ and $\bm x_{n+1}$ are given constants. We can also regard them as random variables. In such a case, an additional expectation for $p(\bm x^n, \bm x_{n+1})$ is required to define $\mathrm{BR}(\delta)$. However, the Bayes optimal decision $\delta^*$ will be the same as \eqref{BayesOptimalPrediction}.}

The Bayes risk function $\mathrm{BR}(\delta)$ is an evaluation criterion of $\delta$, and
it is known that the optimal decision $\delta^*$ that minimizes $\mathrm{BR}(\delta)$ is given as follows.
\begin{proposition}[\cite{MTRF}] \label{OptimalPrediction}
The optimal decision $\delta^* (\bm x_{n+1}, \bm x^n, y^n)$ that minimizes \eqref{BayesRisk} is
\begin{align}
&\delta^* (\bm x_{n+1}, \bm x^n, y^n) = \argmax_{y_{n+1} \in \mathcal{Y}} \sum_{\bm k \in \mathcal{K}} \sum_{T \in \mathcal{T}} \int_{\bm \Theta} p(y_{n+1} | \bm x_{n+1}, \bm \theta, T, \bm k) p(\bm \theta, T, \bm k | \bm x^n, y^n) \mathrm{d} \bm \theta. \label{BayesOptimalPrediction}
\end{align}
\end{proposition}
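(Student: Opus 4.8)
The plan is to treat Proposition~\ref{OptimalPrediction} as an instance of the classical fact that a Bayes decision rule is obtained by minimizing the \emph{posterior} expected loss pointwise, so the work is almost entirely bookkeeping with the densities. First I would regroup the integrand of \eqref{BayesRisk}. Holding $\bm x^n$ and $\bm x_{n+1}$ fixed as the paper does, the product rule gives
\begin{align}
p(\bm k, T, \bm \theta)\, p(y^n \mid \bm x^n, \bm \theta, T, \bm k) = p(y^n \mid \bm x^n)\, p(\bm \theta, T, \bm k \mid \bm x^n, y^n),
\end{align}
where $p(y^n \mid \bm x^n) \coloneqq \sum_{\bm k \in \mathcal K} \sum_{T \in \mathcal T} \int_{\bm \Theta} p(\bm k, T, \bm \theta)\, p(y^n \mid \bm x^n, \bm \theta, T, \bm k)\, \mathrm{d}\bm \theta$ is the marginal likelihood of the training data (here I use that the prior on $(\bm k, T, \bm\theta)$ does not depend on the fixed $\bm x^n$). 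Substituting this into \eqref{BayesRisk} and interchanging the order of the sums and integrals — legitimate since $\ell_{0-1} \ge 0$ and all the $p(\cdot)$ are nonnegative, so Tonelli's theorem applies — yields
\begin{align}
\mathrm{BR}(\delta) = \int_{\mathcal Y^n} p(y^n \mid \bm x^n) \left( \int_{\mathcal Y} \hat p(y_{n+1})\, \ell_{0-1}\big(\delta(\bm x_{n+1}, \bm x^n, y^n),\, y_{n+1}\big)\, \mathrm{d}y_{n+1} \right) \mathrm{d}y^n,
\end{align}
where $\hat p(y_{n+1}) \coloneqq \sum_{\bm k \in \mathcal K} \sum_{T \in \mathcal T} \int_{\bm \Theta} p(y_{n+1} \mid \bm x_{n+1}, \bm \theta, T, \bm k)\, p(\bm \theta, T, \bm k \mid \bm x^n, y^n)\, \mathrm{d}\bm \theta$ is exactly the posterior predictive distribution of $y_{n+1}$ appearing on the right-hand side of \eqref{BayesOptimalPrediction}.

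Next, since $p(y^n \mid \bm x^n) \ge 0$ and is independent of $\delta$, the functional $\mathrm{BR}(\delta)$ is minimized provided that, for every $(\bm x^n, y^n)$, the inner parenthesized quantity — the posterior predictive risk at $\bm x_{n+1}$ — is minimized over the single value $\delta(\bm x_{n+1}, \bm x^n, y^n) \in \mathcal Y$; because $\delta$ ranges over all (measurable) functions and the pointwise minimizer can be chosen to depend measurably on $(\bm x_{n+1}, \bm x^n, y^n)$, this pointwise minimization is equivalent to minimizing $\mathrm{BR}(\delta)$. Finally, for the $0$--$1$ loss, $\ell_{0-1}(a, b)$ equals $0$ when $a = b$ and $1$ otherwise, so the inner integral equals $1 - \hat p\big(\delta(\bm x_{n+1}, \bm x^n, y^n)\big)$ (and, in the continuous case, $1$ minus the value of the predictive density at that point). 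This is minimized precisely by choosing $\delta(\bm x_{n+1}, \bm x^n, y^n)$ to be a maximizer of $\hat p$, which is \eqref{BayesOptimalPrediction}.

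I expect no genuine obstacle here, only two points requiring care. One is keeping the conditioning on the fixed $\bm x^n$ and $\bm x_{n+1}$ explicit in every density, so that the factorization and the identification of $\hat p$ with the posterior predictive are unambiguous. The other is that the $\argmax$ is interchanged with the outer integral only through the pointwise-optimization argument above, and that for continuous $\mathcal Y$ the ``$0$--$1$ loss'' must be read as the risk against a base measure so that ``maximizer of the predictive density'' is meaningful; existence of such a maximizer is an implicit assumption, exactly as in \cite{MTRF}.
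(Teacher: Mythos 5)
Your proposal is correct and follows essentially the same route as the paper's own proof: apply Bayes' theorem to factor the risk as an outer expectation over $p(y^n \mid \bm x^n)$ times an inner posterior-predictive expected loss, minimize the inner quantity pointwise in $(\bm x_{n+1}, \bm x^n, y^n)$, and observe that under the $0$--$1$ loss this pointwise minimizer is the maximizer of the posterior predictive \eqref{PredictiveDistribution}. Your additional remarks (Tonelli, measurable selection, the base-measure reading of the $0$--$1$ loss for continuous $\mathcal Y$) only make explicit details the paper leaves implicit.
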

In this paper, we call $\delta^* (\bm x_{n+1}, \bm x^n, y^n)$ the {\it Bayes optimal prediction}.
For the readers not familiar with Bayesian decision theory, the proof of Proposition \ref{OptimalPrediction} is given in the supplementary materials. For more detail, see e.g., \cite{Berger}.

In order to see the calculation of \eqref{BayesOptimalPrediction}, we decompose it into three components as follows:
\begin{align}
&q (y_{n+1} | \bm x_{n+1}, \bm x^n, y^n, T, \bm k) \coloneqq \int_{\bm \Theta} p(y_{n+1} | \bm x_{n+1}, \bm \theta, T, \bm k) p(\bm \theta | \bm x^n, y^n, T, \bm k) \mathrm{d}\bm \theta, \label{q} \\
&\tilde{q} (y_{n+1} | \bm x_{n+1}, \bm x^n, y^n, \bm k) \coloneqq \sum_{T \in \mathcal{T}} p(T | \bm x^n, y^n, \bm k) q(y_{n+1} | \bm x_{n+1}, \bm x^n, y^n, T, \bm k), \label{tildeq} \\
&\tilde{\tilde{q}} (y_{n+1} | \bm x_{n+1}, \bm x^n, y^n) \coloneqq \sum_{\bm k \in \mathcal{K}} p(\bm k | \bm x^n, y^n) \tilde{q}(y_{n+1} | \bm x_{n+1}, \bm x^n, y^n, \bm k). \label{tildetildeq}
\end{align}
By using these notations, \eqref{BayesOptimalPrediction} is rewritten as follows:
\begin{align}
\delta^* \! (\bm x_{n+1}, \bm x^n\!, y^n) = \argmax_{y_{n+1} \in \mathcal{Y}} \tilde{\tilde{q}} (y_{n+1} | \bm x_{n+1}, \bm x^n\!, y^n). \label{BayesOptimalWithQtildetilde}
\end{align}

We can efficiently calculate \eqref{q} under Assumption \ref{Distribution_theta}.
To calculate \eqref{tildeq}, \cite{suko_alg} and \cite{MTRF} introduced the following notion called meta-tree.
\begin{align}
M_{T,\bm k} \coloneqq \{ (\bm k', T') & \in \mathcal{K} \times \mathcal{T} \mid \bm k' = \bm k \text{ and $T'$ is a subtree of $T$} \}.
\end{align}
Its hierarchical structure enables calculating the summation of $p(T | \bm x^n, y^n, \bm k)q(y_{n+1} | \bm x_{n+1}, \bm x^n,$ $y^n, T, \bm k)$ in the meta-tree $M_{T,\bm k}$ under Assumption \ref{Distribution_T}. If $T$ of the meta-tree $M_{T, \bm k}$ equals $T_\mathrm{max}$, such summation is equivalent to calculating \eqref{tildeq}. Moreover, its computational cost is only $O(D_\mathrm{max} n)$ and retains the Bayes optimality. 

Therefore, we focus on the efficient calculation of \eqref{tildetildeq}, that is, the summation with respect to the meta-trees $M_{T_\mathrm{max}, \bm k}$, which has not been established yet.
Although an approximative method to calculate \eqref{tildetildeq} has been proposed in \cite{MTRF}, it loses the Bayes optimality.

Note that we do not learn the thresholds for subspace partitioning because they are deterministiclly derived from $\bm k$ in our setup (see Definition \ref{subspace}). In other words, we regard the problem of threshold learning as the problem of learning how many times the same $k$ is assigned on a path from the root node to a leaf node, and optimally solve it in Bayesian manner.

%%%%%%%%%%%%%%%%%%%%%%%%%%%%%%%%
\section{Meta-Tree Markov Chain Monte Carlo Methods}\label{SectionMTMCMC}

This section describes our main results. We propose a Markov chain Monte Carlo (MCMC) method to calculate \eqref{tildetildeq} and construct an algorithm to predict new data, i.e., we approximate \eqref{BayesOptimalWithQtildetilde} as follows:
\begin{align}
\delta^* (\bm x_{n+1}, \bm x^n, y^n) \approx \argmax_{y_{n+1} \in \mathcal{Y}} \frac{1}{t_\mathrm{end}} \sum_{t=1}^{t_\mathrm{end}} \tilde{q}(y_{n+1} | \bm x_{n+1}, \bm x^n, y^n, \bm k^{(t)}), \label{ApproximatedBayesOptimal}
\end{align}
where $t_\mathrm{end} \in \mathbb{N}$ is the maximum number of the MCMC iteration and $\{ \bm k^{(t)} \}_{t=1}^{t_\mathrm{end}}$ denotes a sample following $p(\bm k | \bm x^n, y^n)$, which is obtained by our MCMC method. We call this method {\it meta-tree Markov chain Monte Carlo (MTMCMC) method}. Specifically, we propose a MH algorithm (see, e.g., \cite{Bishop}) and extend it to a replica exchange Monte Carlo (REMC) method (e.g., \cite{ReplicaExchange}) to deal with multimodality of the posterior distribution. Herein, we only describe the underlying MH method. The extension to REMC method is described in supplementary materials.

As usual MH methods, we generate $\bm k^*$ from a proposal distribution $q(\bm k^* | \bm k^{(t-1)})$ in the $t$th iteration of our algorithm. Then, it will be accepted according to the following acceptance probability.
\begin{align}
A (\bm k^*, \bm k^{(t-1)}) = \min \left\{ 1, \frac{p(\bm k^* | \bm x^n, y^n) q(\bm k^{(t-1)} | \bm k^*)}{p(\bm k^{(t-1)} | \bm x^n, y^n) q(\bm k^* | \bm k^{(t-1)})} \right\}. \label{AcceptanceProbability1}
\end{align}
If $\bm k^*$ is accepted, we make $\bm k^{(t)} \leftarrow \bm k^*$, otherwise $\bm k^{(t)} \leftarrow \bm k^{(t-1)}$. 

\begin{proposition}\label{MCMCValidity}
If $q(\bm k^* | \bm k^{(t-1)})$ is time-invariant and $q(\bm k^* | \bm k^{(t-1)}) > 0$ holds for any $\bm k^*$ and $\bm k^{(t-1)}$ through this process, then the \textit{detailed balance} is satisfied and an empirical distribution of the obtained sample $\{ \bm k^{(t)} \}_{t=1}^{t_\mathrm{end}}$ converges to the objective distribution $p(\bm k | \bm x^n, y^n )$ after sufficient iteration.
\end{proposition}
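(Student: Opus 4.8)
The plan is to run the classical Metropolis--Hastings verification, which is particularly clean here because the state space $\mathcal{K}=\{1,\dots,p+q\}^{|\mathcal{I}_\mathrm{max}|}$ is \emph{finite} and, by Assumption \ref{Distribution_k} together with the strict positivity of the marginal likelihood, the target $\pi(\bm k)\coloneqq p(\bm k\mid \bm x^n,y^n)$ is strictly positive on all of $\mathcal{K}$. First I would write out the transition kernel $P$ induced by the algorithm, which is time-homogeneous precisely because $q$ is time-invariant: for $\bm k\neq\bm k'$, $P(\bm k,\bm k')=q(\bm k'\mid\bm k)\,A(\bm k',\bm k)$ with $A$ as in \eqref{AcceptanceProbability1}, and $P(\bm k,\bm k)=1-\sum_{\bm k'\neq\bm k}q(\bm k'\mid\bm k)\,A(\bm k',\bm k)$.

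The first step is detailed balance. Substituting the definition of $A$, for $\bm k\neq\bm k'$,
\begin{align*}
\pi(\bm k)\,P(\bm k,\bm k') &= \pi(\bm k)\,q(\bm k'\mid\bm k)\,\min\!\left\{1,\ \frac{\pi(\bm k')\,q(\bm k\mid\bm k')}{\pi(\bm k)\,q(\bm k'\mid\bm k)}\right\} \\
&= \min\!\left\{\pi(\bm k)\,q(\bm k'\mid\bm k),\ \pi(\bm k')\,q(\bm k\mid\bm k')\right\},
\end{align*}
which is invariant under swapping $\bm k\leftrightarrow\bm k'$ (the case $\bm k=\bm k'$ being trivial); hence $\pi(\bm k)P(\bm k,\bm k')=\pi(\bm k')P(\bm k',\bm k)$, and summing this identity over $\bm k$ shows $\pi$ is a stationary distribution of $P$.

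The second step turns stationarity into the stated convergence. Since $q(\bm k'\mid\bm k)>0$ for every pair and $\pi>0$ everywhere, $A(\bm k',\bm k)>0$, so $P(\bm k,\bm k')>0$ for all $\bm k,\bm k'$; thus $P$ is irreducible on the finite set $\mathcal{K}$, hence positive recurrent, with $\pi$ as its unique stationary distribution. By the ergodic theorem (strong law of large numbers) for finite irreducible Markov chains, for each $\bm k\in\mathcal{K}$ the empirical frequency $\frac{1}{t_\mathrm{end}}\sum_{t=1}^{t_\mathrm{end}}\mathbf{1}\{\bm k^{(t)}=\bm k\}$ converges almost surely to $\pi(\bm k)=p(\bm k\mid \bm x^n,y^n)$ as $t_\mathrm{end}\to\infty$, i.e.\ the empirical distribution of $\{\bm k^{(t)}\}$ converges to the objective distribution. (If one additionally wants the law of $\bm k^{(t)}$ itself to converge, one notes that $P(\bm k,\bm k)>0$ for at least one $\bm k$, which gives aperiodicity.)

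No step here is genuinely difficult; the only care needed is bookkeeping. I would make explicit that $\mathcal{K}$ is finite and that $\pi$ is strictly positive (so that the argument of the $\min$ in $A$ is always well defined and $A>0$), and I would be precise about which standard result supplies the empirical-distribution limit (the Markov-chain SLLN) versus marginal convergence (which additionally needs aperiodicity). If one preferred not to assume $\pi>0$ globally, a one-line addendum restricts attention to the set $\{\bm k:\pi(\bm k)>0\}$, which is closed under $P$ and on which the same argument applies verbatim.
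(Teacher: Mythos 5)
Your proof is correct and follows essentially the same route as the paper's: write out the Metropolis--Hastings transition kernel, verify detailed balance via the $\min$-swap identity, and invoke ergodicity (guaranteed by $q>0$ and time-invariance) to conclude convergence of the empirical distribution to $p(\bm k \mid \bm x^n, y^n)$. Your version is in fact slightly more careful than the paper's, making explicit the finiteness of $\mathcal{K}$, the strict positivity of the target, and the distinction between the Markov-chain SLLN for the empirical distribution and aperiodicity for marginal convergence.
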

For the readers not familiar with MCMC method, we briefly prove this proposition in the supplementary materials. For more detail, see e.g., Chapter 11 of \cite{Bishop}

In our case, from the Bayes' theorem and Assumption \ref{Distribution_k}, Eq.\ \eqref{AcceptanceProbability1} is further transformed as follows:
\begin{align}
A (\bm k^*, \bm k^{(t-1)}) = \min \left\{ 1, \frac{p(y^n | \bm x^n, \bm k^*) q(\bm k^{(t-1)} | \bm k^*)}{p(y^n | \bm x^n, \bm k^{(t-1)}) q(\bm k^* | \bm k^{(t-1)})} \right\}. \label{acceptance}
\end{align}
In general, $p(y^n | \bm x^n, \bm k^*)$ and $p(y^n | \bm x^n, \bm k^{(t-1)})$ in \eqref{acceptance} cannot be efficiently calculated since it requires marginalization for $T$ and $\bm \theta$. However, we can calculate them by an algorithm proposed in \cite{batch_updating}. Therefore, at this point, the rest of the problem is design of the proposal distribution $q(\bm k^* | \bm k^{(t-1)})$.

\subsection{Design of Proposal Distribution}

Asymptotically, we can use any time-invariant distribution that satisfies $q(\bm k^* | \bm k^{(t-1)}) > 0$ for any $\bm k^*$ and $\bm k^{(t-1)}$, e.g., the uniform distribution $q(\bm k^* | \bm k^{(t-1)}) = (p+q)^{-|\mathcal{I}_\mathrm{max}|}$. However, its design crucially affects the performance on a finite MCMC sample.
This can be explained from a viewpoint of an analogy of the MH algorithm and a neighborhood searching algorithm. In the MH algorithm, $\bm k^*$ is proposed from a kind of neighborhood of $\bm k^{(t-1)}$ according to $q(\bm k^* | \bm k^{(t-1)})$. Roughly speaking, it will be accepted if it increases the probability of the objective distribution, i.e., $p(\bm k^* | \bm x^n, y^n) > p(\bm k^{(t-1)} | \bm x^n, y^n)$. Since the entropy of $q(\bm k^* | \bm k^{(t-1)})$ corresponds to the step size of neighborhood search, it should be larger to accelerate the search but it should be smaller to increase the acceptance ratio. Therefore, a desirable $q(\bm k^* | \bm k^{(t-1)})$ should induce many changes in the elements of $\bm k^{(t-1)}$ when $p(\bm k^{(t-1)} | \bm x^n, y^n)$ is small and a few changes when $p(\bm k^{(t-1)} | \bm x^n, y^n)$ is large. Note that $\bm k$ is discrete and hierarchically structured. Therefore, we cannot use the derivative of $p(\bm k^{(t-1)} | \bm x^n, y^n)$, and any Gibbs sampler for our model has not been reported to our best knowledge. Then, we use the posterior distribution $p(T | \bm x^n, y^n, \bm k^{(t-1)})$ as a heuristic to design $q(\bm k^* | \bm k^{(t-1)})$.

First, we have the following proposition.
\begin{proposition}[\cite{suko_alg, MTRF, full_rooted_trees}]
For any $\bm x^n$, $y^n$, and $\bm k$, the posterior distribution of $T$ is represented as follows:
\begin{align}
p(T | \bm x^n, y^n, \bm k) = \prod_{s \in \mathcal{I}_T} g_{s|\bm x^n, y^n, \bm k} \prod_{s' \in \mathcal{L}_T} (1-g_{s'|\bm x^n, y^n, \bm k}),
\end{align}
where $g_{s|\bm x^n, y^n, \bm k} \in [0,1]$ is a posterior parameter calculated from $\bm x^n$, $y^n$, and $\bm k$ for each $s \in \mathcal{S}_\mathrm{max}$.
\end{proposition}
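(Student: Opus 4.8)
The plan is to obtain the posterior from Bayes' theorem, show that the marginal likelihood factorizes over the leaves of $T$, and then reorganize the resulting expression by an upward recursion on the nodes of $T_\mathrm{max}$ that exhibits the claimed product form.

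First I would apply Bayes' theorem, so that $p(T | \bm x^n, y^n, \bm k) \propto p(T)\, p(y^n | \bm x^n, T, \bm k)$. By Assumption~\ref{Distribution_y} every $y_i$ depends only on $\theta_{s_{\bm k, T}(\bm x_i)}$, and by Assumption~\ref{Distribution_theta} the prior on $\bm\theta$ factorizes over $\mathcal{S}_\mathrm{max}$, so integrating out $\bm\theta$ gives
\begin{align}
p(y^n | \bm x^n, T, \bm k) = \prod_{s \in \mathcal{L}_T} m_s, \qquad m_s \coloneqq \int \Bigl( \prod_{i\,:\, \bm x_i \in \mathcal{X}_{\bm k}(s)} p(y_i | \theta_s)\Bigr) p(\theta_s)\,\mathrm{d}\theta_s ,
\end{align}
where $m_s$ depends on $\bm k$ only through $\mathcal{X}_{\bm k}(s)$ and is computable by Assumption~\ref{Distribution_theta}. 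Combining this with \eqref{DistributionTree},
\begin{align}
p(T | \bm x^n, y^n, \bm k) = \frac{1}{Z}\prod_{s \in \mathcal{I}_T} g_s \prod_{s' \in \mathcal{L}_T} (1-g_{s'})\, m_{s'}, \qquad Z \coloneqq p(y^n | \bm x^n, \bm k).
\end{align}

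Next I would introduce, for each $s \in \mathcal{S}_\mathrm{max}$, a subtree marginal likelihood $\tilde m_s$ defined by the upward recursion $\tilde m_s \coloneqq m_s$ for $s \in \mathcal{L}_\mathrm{max}$ and $\tilde m_s \coloneqq (1-g_s)\, m_s + g_s\, \tilde m_{s_l}\, \tilde m_{s_r}$ for $s \in \mathcal{I}_\mathrm{max}$ with children $s_l, s_r$. An easy induction on depth then shows that $\tilde m_s$ equals the sum, over all full subtrees of $T_\mathrm{max}$ rooted at $s$, of $\prod_{u} g_u \prod_{u'} (1-g_{u'})\, m_{u'}$ (the first product over the inner nodes of the subtree, the second over its leaves); in particular $\tilde m_{s_\lambda} = Z$. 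I then set the candidate posterior parameters $g_{s | \bm x^n, y^n, \bm k} \coloneqq g_s\, \tilde m_{s_l}\, \tilde m_{s_r} / \tilde m_s$ for $s \in \mathcal{I}_\mathrm{max}$ and $g_{s | \bm x^n, y^n, \bm k} \coloneqq 0$ for $s \in \mathcal{L}_\mathrm{max}$, so that $1 - g_{s | \bm x^n, y^n, \bm k} = (1-g_s)\, m_s / \tilde m_s$, and both lie in $[0,1]$ provided $\tilde m_{s_\lambda} > 0$.

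Finally I would verify the claim by substituting these definitions into $\prod_{s \in \mathcal{I}_T} g_{s | \bm x^n, y^n, \bm k} \prod_{s' \in \mathcal{L}_T} (1 - g_{s' | \bm x^n, y^n, \bm k})$ and telescoping. The numerator collects $\prod_{s \in \mathcal{I}_T} g_s \prod_{s' \in \mathcal{L}_T} (1-g_{s'})\, m_{s'}$ together with $\prod_{s \in \mathcal{I}_T} \tilde m_{s_l}\tilde m_{s_r}$; since in a full binary tree the children of the inner nodes are precisely the non-root nodes of $T$, this last product equals $\prod_{u \in \mathcal{S}_T \setminus \{s_\lambda\}} \tilde m_u$, which cancels against the denominator $\prod_{u \in \mathcal{S}_T} \tilde m_u$ leaving only $1/\tilde m_{s_\lambda} = 1/Z$. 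This reproduces exactly the expression for $p(T | \bm x^n, y^n, \bm k)$ obtained above. The only point requiring care is the bookkeeping in this telescoping step — in particular the edge cases $\mathcal{I}_T = \emptyset$ (the root-only tree) and the leaves of $T_\mathrm{max}$, for which $g_s = 0$ forces the node to remain a leaf — together with checking $\tilde m_s > 0$ so that the ratio defining $g_{s | \bm x^n, y^n, \bm k}$ is well defined, which holds as soon as the predictive distributions of Assumption~\ref{Distribution_theta} are strictly positive; I expect this modest combinatorial identity to be the main obstacle.
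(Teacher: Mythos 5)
Your proof is correct: the factorization of the marginal likelihood over $\mathcal{L}_T$, the upward recursion $\tilde m_s = (1-g_s)m_s + g_s\tilde m_{s_l}\tilde m_{s_r}$, and the definition $g_{s|\bm x^n, y^n, \bm k} = g_s\tilde m_{s_l}\tilde m_{s_r}/\tilde m_s$ with the telescoping cancellation is exactly the standard meta-tree (context-tree-weighting style) argument used in the cited references \cite{suko_alg, MTRF, full_rooted_trees}, which this paper relies on without reproducing a proof. The only caveats you already flag correctly (positivity of $\tilde m_s$, the root-only tree, and $g_s=0$ on $\mathcal{L}_\mathrm{max}$) are handled as in those works, so nothing is missing.
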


\begin{wrapfigure}[12]{r}[0pt]{0.4\textwidth}
\centering
\includegraphics[width=\linewidth]{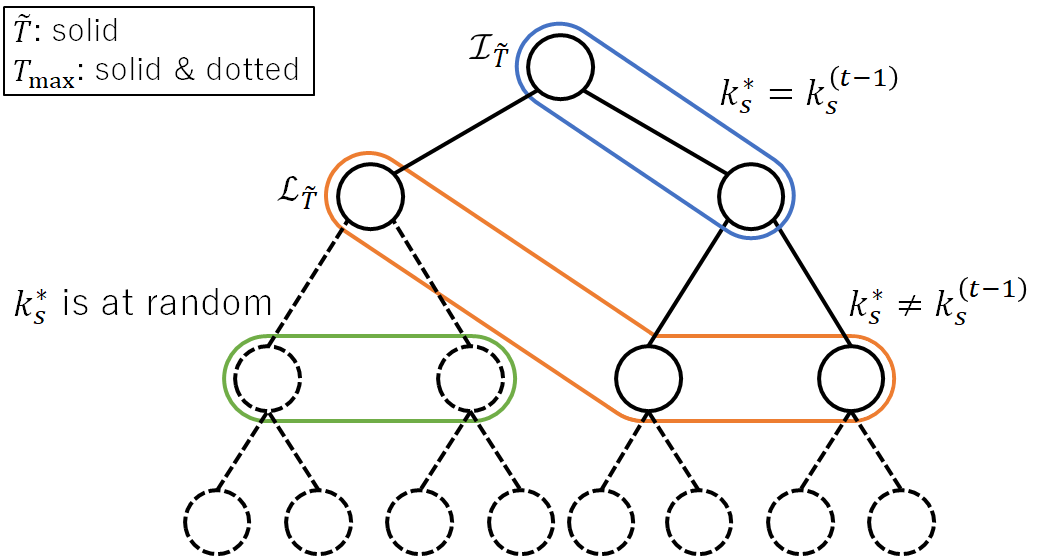}
\caption{An example of $\tilde{T}$ and properties of $\bm k^*$. Here, we assume $D_\mathrm{max}=3$. %The tree with solid line represents $\tilde{T}$. For $s \in \mathcal{I}_{\tilde{T}}$, we have $k_s^* = k_s^{(t-1)}$. For $s \in \mathcal{L}_{\tilde{T}} \cap \mathcal{I}_\mathrm{max}$, we have $k_s^* \neq k_s^{(t-1)}$. For $s \in \mathcal{I}_\mathrm{max} \setminus \mathcal{S}_{\tilde{T}}$, $k_s^*$ follows the uniform distribution on $\{ 1, 2, \dots , p+q \}$.
}
\label{proposal_distribution}
\end{wrapfigure}

Therefore, in the $t$th iteration, we can represent the posterior distribution $p(T | \bm x^n, y^n, \bm k^{(t-1)})$ in the same form as the prior distribution $p(T)$ with a posterior edge spreading probability $g_{s | \bm x^n, y^n, \bm k^{(t-1)}}$ of a node $s$. In other words, given $\bm k^{(t-1)}$, the data observation process includes an explanatory variable $x_{k_s^{(t-1)}}$ assigned to $s$ with a posterior probability $g_{s | \bm x^n, y^n, \bm k^{(t-1)}}$ under the condition that it includes the explanatory variables assigned to all the ancestor nodes of $s$ (see also Fig.\ \ref{tree_distribution}). We use this probability $g_{s|\bm x^n, y^n, \bm k^{(t-1)}}$ as a heuristic to determine the fixed elements of $\bm k^*$, that is, the smaller $g_{s|\bm x^n, y^n, \bm k^{(t-1)}}$ the node $s$ has, the more frequently $k_s^{(t-1)}$ is changed. Consequently, we generate $\bm k^*$ according to the following procedure, see also Fig.\ \ref{proposal_distribution}. (The initial value $\bm k^{(0)}$ is generated from the uniform distribution on $\mathcal{K}$.)

\textbf{1.} $\tilde{T}$ is generated according to
\begin{align}
q(\tilde{T} | & \bm x^n, y^n, \bm k^{(t-1)}) \coloneqq \prod_{s \in \mathcal{I}_{\tilde{T}}} \min \{ g_{s|\bm x^n, y^n, \bm k^{(t-1)}}, \bar{g} \} \prod_{s' \in \mathcal{L}_{\tilde{T}}} (1-\min \{ g_{s|\bm x^n, y^n, \bm k^{(t-1)}}, \bar{g} \}), \label{Probability of T^(t)}
\end{align}
where $\bar{g} \in [0,1]$ is predetermined in a burn-in phase (see also the supplementary materials).

\textbf{2.} For $s \in \mathcal{I}_{\tilde{T}}$, $k_s^{(t-1)}$ is fixed and $k_s^* = k_s^{(t-1)}$.

\textbf{3.} For $s \in \mathcal{L}_{\tilde{T}} \cap \mathcal{I}_\mathrm{max}$, $k_s^{(t-1)}$ is changed according to the uniform distribution on $\{ 1, 2, \dots , p+q \} \setminus \{ k_s^{(t-1)} \}$.

\textbf{4.} The others are changed according to the uniform distribution on $\{ 1, 2, \dots , p+q \}$.

Note that $\tilde{T}$ is uniquely determined from $\bm k^*$ and $\bm k^{(t-1)}$ as the maximum tree that satisfies $k^*_s = k^{(t-1)}_s$ for all $s \in \mathcal{I}_{\tilde{T}}$. Therefore, the proposal distribution $q(\bm k^* | \bm k^{(t-1)})$ is represented as follows:
\begin{align}
&q(\bm k^* | \bm k^{(t-1)}) = q(\tilde{T} | \bm x^n, y^n, \bm k^{(t-1)}) (p+q-1)^{-|\mathcal{L}_{\tilde{T}} \cap \mathcal{I}_\mathrm{max}|} (p+q)^{-|\mathcal{I}_\mathrm{max} \backslash \mathcal{S}_{\tilde{T}}|}. \label{TruncatedProposal}
\end{align}

Moreover, the following theorem holds.
\begin{theorem}
    By using the MCMC sample obtained by the MH method based on the proposal distribution \eqref{TruncatedProposal} and the acceptance probability \eqref{acceptance}, the MTMCMC method defined in \eqref{ApproximatedBayesOptimal} minimizes the Bayes risk function \eqref{BayesRisk}, i.e., achieves the Bayes optimality, after sufficient MCMC iterations.
\end{theorem}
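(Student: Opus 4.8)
The plan is to deduce the theorem from Proposition~\ref{MCMCValidity}, the decomposition \eqref{q}--\eqref{BayesOptimalWithQtildetilde}, and Proposition~\ref{OptimalPrediction}, in four steps: (i) show the proposal \eqref{TruncatedProposal} is time-invariant; (ii) show it has full support, i.e.\ $q(\bm k^* \mid \bm k^{(t-1)}) > 0$ for all $\bm k^*, \bm k^{(t-1)} \in \mathcal{K}$; (iii) invoke Proposition~\ref{MCMCValidity} to obtain convergence of the empirical distribution of $\{ \bm k^{(t)} \}$ to $p(\bm k \mid \bm x^n, y^n)$ and pass it through $\bm k \mapsto \tilde q(y_{n+1} \mid \bm x_{n+1}, \bm x^n, y^n, \bm k)$; and (iv) conclude via \eqref{tildetildeq}, \eqref{BayesOptimalWithQtildetilde} and Proposition~\ref{OptimalPrediction}.

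For (i): in the sampling phase $\bar g$ is already fixed, the posterior parameters $g_{s \mid \bm x^n, y^n, \bm k^{(t-1)}}$ in \eqref{Probability of T^(t)} are deterministic functions of $\bm k^{(t-1)}$ and the (fixed) data, and Steps~2--4 depend only on $\bm k^{(t-1)}$ and $\tilde T$; hence $q(\bm k^* \mid \bm k^{(t-1)})$ depends on $t$ solely through $\bm k^{(t-1)}$, so it is time-invariant. I would also note en route that \eqref{acceptance} is obtained from \eqref{AcceptanceProbability1} by Bayes' theorem and Assumption~\ref{Distribution_k}, and that $p(y^n \mid \bm x^n, \bm k) > 0$ for the models of Example~\ref{Example_y}, so the acceptance ratio is well defined and the chain we actually run is exactly the one covered by Proposition~\ref{MCMCValidity}.

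Step (ii) is the main obstacle. Fix $\bm k^{(t-1)}$ and an arbitrary target $\bm k^*$, and let $\tilde T$ be the maximal subtree of $T_\mathrm{max}$ with $k^*_s = k^{(t-1)}_s$ for all $s \in \mathcal{I}_{\tilde T}$, as noted after Step~4; then $q(\bm k^* \mid \bm k^{(t-1)})$ factorizes as in \eqref{TruncatedProposal}, and it remains to verify that each factor is strictly positive. The power $(p+q)^{-|\mathcal{I}_\mathrm{max} \setminus \mathcal{S}_{\tilde T}|}$ is positive because Step~4 assigns every value positive probability, and $(p+q-1)^{-|\mathcal{L}_{\tilde T} \cap \mathcal{I}_\mathrm{max}|}$ is the correct (positive) probability only once we know Step~3 can reach $\bm k^*$, i.e.\ $k^*_s \neq k^{(t-1)}_s$ for every $s \in \mathcal{L}_{\tilde T} \cap \mathcal{I}_\mathrm{max}$ --- but this is exactly the \emph{maximality} of $\tilde T$, since equality at such an $s$ would allow extending $\tilde T$. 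Finally $q(\tilde T \mid \bm x^n, y^n, \bm k^{(t-1)}) > 0$ because each factor $\min\{ g_{s \mid \cdot}, \bar g \}$ and $1 - \min\{ g_{s \mid \cdot}, \bar g \}$ is strictly positive, which holds provided $\bar g \in (0,1)$ (ensured by the burn-in tuning) and $g_{s \mid \bm x^n, y^n, \bm k^{(t-1)}} \in (0,1)$ for $s \in \mathcal{I}_\mathrm{max}$, the latter inherited from $g_s \in (0,1)$ for $s \in \mathcal{I}_\mathrm{max}$ through the Bayesian recursion defining the posterior parameters. Hence $q(\bm k^* \mid \bm k^{(t-1)}) > 0$ for all $\bm k^*, \bm k^{(t-1)}$.

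For (iii)--(iv): Proposition~\ref{MCMCValidity} now yields that the empirical distribution of $\{ \bm k^{(t)} \}_{t=1}^{t_\mathrm{end}}$ converges to $p(\bm k \mid \bm x^n, y^n)$ as $t_\mathrm{end} \to \infty$. Since $\mathcal{K}$ is finite and, for each fixed $y_{n+1}$, $\bm k \mapsto \tilde q(y_{n+1} \mid \bm x_{n+1}, \bm x^n, y^n, \bm k)$ is bounded on $\mathcal{K}$, the ergodic average in \eqref{ApproximatedBayesOptimal} converges to $\sum_{\bm k \in \mathcal{K}} p(\bm k \mid \bm x^n, y^n) \tilde q(y_{n+1} \mid \bm x_{n+1}, \bm x^n, y^n, \bm k) = \tilde{\tilde q}(y_{n+1} \mid \bm x_{n+1}, \bm x^n, y^n)$ by \eqref{tildetildeq}; consequently the right-hand side of \eqref{ApproximatedBayesOptimal} converges to $\argmax_{y_{n+1}} \tilde{\tilde q}(y_{n+1} \mid \bm x_{n+1}, \bm x^n, y^n) = \delta^*(\bm x_{n+1}, \bm x^n, y^n)$ by \eqref{BayesOptimalWithQtildetilde}, and $\delta^*$ minimizes $\mathrm{BR}(\delta)$ by Proposition~\ref{OptimalPrediction}. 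The one routine caveat I would flag is the behaviour of $\argmax$ under ties, handled by a fixed tie-breaking rule or by noting that the approximant eventually belongs to the set of maximizers.
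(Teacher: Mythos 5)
Your proposal follows essentially the same route as the paper's proof: verify time-invariance and strict positivity of the proposal \eqref{TruncatedProposal}, invoke Proposition~\ref{MCMCValidity} to get convergence of the empirical distribution to $p(\bm k \mid \bm x^n, y^n)$, and then pass the ergodic average in \eqref{ApproximatedBayesOptimal} to $\tilde{\tilde q}$ and conclude via \eqref{BayesOptimalWithQtildetilde} and Proposition~\ref{OptimalPrediction}. It is correct, and in fact more explicit than the paper on the positivity step, where you rightly note it hinges on $\bar g \in (0,1)$ and $g_{s \mid \bm x^n, y^n, \bm k} \in (0,1)$ for $s \in \mathcal{I}_\mathrm{max}$ (and on maximality of $\tilde T$), conditions the paper's proof leaves implicit.
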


\begin{proof}
If $\bm k^{(t-1)} = \bm k^{(t'-1)}$ holds, then $q(\bm k^* | \bm k^{(t-1)}) = q(\bm k^* | \bm k^{(t'-1)})$ clearly holds for any $\bm k^*$ even when $t \neq t'$. Therefore, a Markov chain of $\bm k^{(t)}$ induced from $q(\bm k^* | \bm k^{(t-1)})$ and $A(\bm k^*, \bm k^{(t-1)})$ is time-invariant. Moreover, $q(\bm k^* | \bm k^{(t-1)}) > 0$ holds for any $\bm k^*$ and $\bm k^{(t-1)}$. Therefore, the induced Markov chain of $\bm k^{(t)}$ is ergodic. Then, $q(\bm k^* | \bm k^{(t-1)})$ satisfies the condition of Proposition \ref{MCMCValidity}. Therefore, empirical distribution of the obtained sample converges to $p(\bm k | \bm x^n, y^n)$. Lastly, the right-hand side of \eqref{ApproximatedBayesOptimal} converges to the left-hand side, which is the decision function strictly minimizing the Bayes risk function, after sufficient MCMC iterations because of the law of large numbers.
\end{proof}

\begin{remark}
$\bar{g}$ is an additional parameter to control the entropy of $q(\bm k^* | \bm k^{(t-1)})$. When $n$ is large, $g_{s|\bm x^n, y^n, \bm k^{(t-1)}}$ for the nodes near the root $s_\lambda$ numerically equals to 1. Then, $k_s$ for them tends to be fixed and ergodicity will be collapsed. Introducing $\bar{g}$, all the elements of $\bm k^{(t-1)}$ are refreshed with the probability $1-\bar{g}$ and the ergodicity is ensured. This induces a ``jump'' of $\bm k^*$ and has some effects to deal with multimodality of the posterior distribution. A more effective approach to multimodality is extending our MH method to the REMC method, which is described in the supplementary materials.
\end{remark}

\begin{remark}\label{cancel}
Because of the uniqueness of $\tilde{T}$, transition from $\bm k^{(t-1)}$ to $\bm k^*$ cannot occur through any other tree than $\tilde{T}$, and vice versa. Therefore, $q(\bm k^{(t-1)} | \bm k^*)$ is represented as follows.
\begin{align}
&q(\bm k^{(t-1)} | \bm k^*) = q(\tilde{T} | \bm x^n, y^n, \bm k^*) (p+q-1)^{-|\mathcal{L}_{\tilde{T}} \cap \mathcal{I}_\mathrm{max}|} (p+q)^{-|\mathcal{I}_\mathrm{max} \backslash \mathcal{S}_{\tilde{T}}|}, \label{TruncatedProposal2}
\end{align}
where $\tilde{T}$ is same as that in \eqref{TruncatedProposal}. As a result, we can efficiently evaluate \eqref{acceptance} because many terms in the numerator and the denominator of \eqref{acceptance} are canceled by substituting \eqref{TruncatedProposal} and \eqref{TruncatedProposal2}. Further complexity reduction and complexity analysis are described in the supplementary materials.
\end{remark}

%%%%%%%%%%%%%%%%%%%%%%%%%%%%%%%%
\section{Experiments}

Herein, we introduce only two experiments. In the supplementary materials, we described the others, e.g., confirmation of convergence of the approximated posterior to the true posterior; comparison with the uniform proposal distribution and the other tree posterior based proposal distribution; and confirmation of behavior of likelihood during the MCMC sampling.

First, we summarize the methods used in this section and their abbreviations. Most methods are used with their default hyperparameters (see the supplementary materials for detail).

\textbf{MTMCMC-Be-100(50) etc.:} the method proposed in this paper. The letters next to MTMCMC mean a stochastic model of $y$ assigned to each leaf node (see also Example \ref{Example_y}). Be, Po, and LR means the Bernoulli distribution, the Poisson distribution, and the LR model, respectively. The numbers at the end mean the number of MCMC iterations and the length of burn-in, e.g., 100(50) means we make 150 proposals and remove the first 50 of them. \textbf{MTRF-Be etc.:} the meta-tree random forest \cite{MTRF} implemented in \cite{bayesml}. The letters next to MTRF have a similar meaning to MTMCMC. \textbf{RF:} The random forest \cite{RF} implemented in \cite{scikit-learn}. \textbf{XGBoost:} the XGBoost \cite{XGBoost}. \textbf{LightGBM:} the light GBM implemented in \cite{lightGBM}. \textbf{BART100(50) etc.:} the BART \cite{BART} implemented in \cite{BART_R}. The number of trees in the BART model is assumed to be one for comparison with our method under the same condition. It can be specified by \texttt{ntree} option. The number at the end has a similar meaning to MTMCMC.

\subsection{Experiment 1: Bayes Optimality of Prediction}

\textbf{Purpose:} we confirm the Bayes optimality of our prediction method. 
Under the Bayes criterion, our method is expected to outperform any other methods for synthetic data generated from the assumed stochastic model. In particular, our method cannot be outperformed by any function tree based methods such as RF, XGBoost, and LightGBM. Our method will also outperform MTRF because it approximates \eqref{tildetildeq}.

\textbf{Conditions:} we assume $p=0$ and $q=20$. Therefore, all the explanatory variables are binary. $\mathcal{Y}$ is also the binary set $\{ 0, 1\}$. We assume $D_\mathrm{max}=10$. $p(\bm k)$ is the uniform distribution on $\mathcal{K}$. $p(T)$ is the tree distribution of \eqref{DistributionTree} with $g_s = 0.75$ for any $s \in \mathcal{S}_\mathrm{max}$. We generate $\bm k$ and $T$ 100 times. Subsequently, we generate $\bm \theta$, $\bm x^n$, and $y^n$ 100 times for each $\bm k$ and $T$. $\theta_s$ is independently distributed with $p(\theta_s) = \mathrm{Beta}(\theta_s | 0.5, 0.5)$. The $i$th explanatory variable $\bm x_i$ is independently generated from the uniform distribution on $\{ 0, 1\}^q$. The data observation model $p(y | \theta_s)$ is the Bernoulli distribution $\mathrm{Bern}(y | \theta_s)$. Each method is trained with the generated data up to the size of 200. The size of test data is 100. The other conditions are given in the supplementary materials.

\begin{wrapfigure}[15]{r}[0pt]{0.5\textwidth}
\centering
\includegraphics[width=\linewidth]{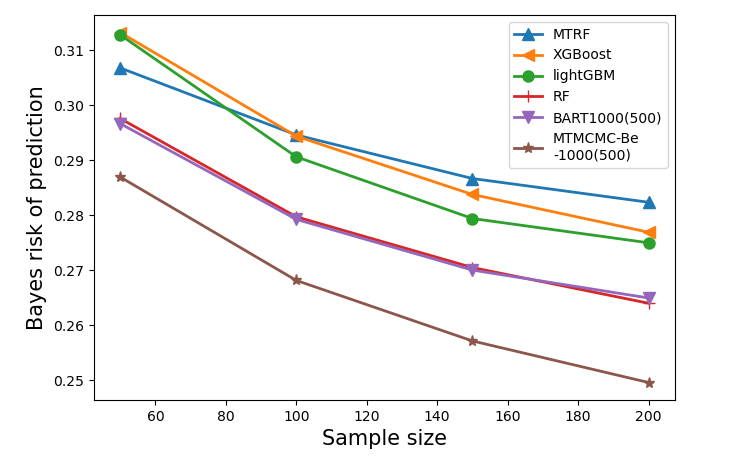}
\caption{The result of Experiment 1.}
\label{result_of_experiment_2}
\end{wrapfigure}

\textbf{Results:} Figure \ref{result_of_experiment_2} shows the approximated Bayes risk of the prediction, i.e., the average of the classification error ratio for all the generated models, parameters, and data. As expected, our proposed method outperforms any other methods.
Notably, our method outperformed the BART, which is also based on Bayesian inference. We consider it is because the model assumed in the BART is slightly different from our model. While the binary objective variable is directly output from a distribution on a leaf node in our model, it follows a logit-transformed distribution of a continuous output from a leaf node in the BART model.

\

\subsection{Experiment 2: Real-World Example}

\textbf{Purpose:} We confirm the performance of our method on real-world example. 

\textbf{Conditions:} We apply our method to a binary classification task on data about the Titanic \cite{titanic_origin}\footnote{Data obtained from http://hbiostat.org/data courtesy of the Vanderbilt University Department of Biostatistics.} and a regression task on data about abalones from the UCI repository \cite{UCIrepo}. Note that $\mathcal{Y}$ of the abalone data is $\mathbb{Z}_{\geq 0}$.
% The sample size of the Titanic data is 1,309 and that of the abalone is 4,177.
We perform the five-fold cross-validation for both data. The other conditions are given in the supplementary materials. In this experiment, we use the REMC method to deal with multimodality of the poseterior distribution. For more detail, see the supplementary materials. Only in this experiment, we used the sum of tree models, i.e., BART model with a default \texttt{ntree} option, for comparison, although our method is based on a single model tree. It will be represented by BART-Multi in figures.

\begin{figure}[h]
\centering
\begin{minipage}[t]{0.47\textwidth}
\includegraphics[width=\linewidth]{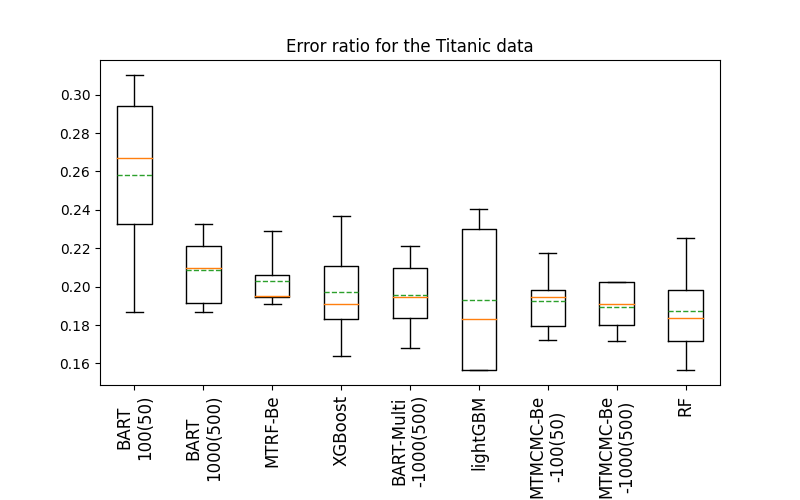}
\caption{The prediction error ratio for survival of passengers on the Titanic \cite{titanic_origin} (in order of the average error ratio).}
\label{result_of_experiment_3_revised}
\end{minipage}
\hfill
\begin{minipage}[t]{0.47\textwidth}
\includegraphics[width=\linewidth]{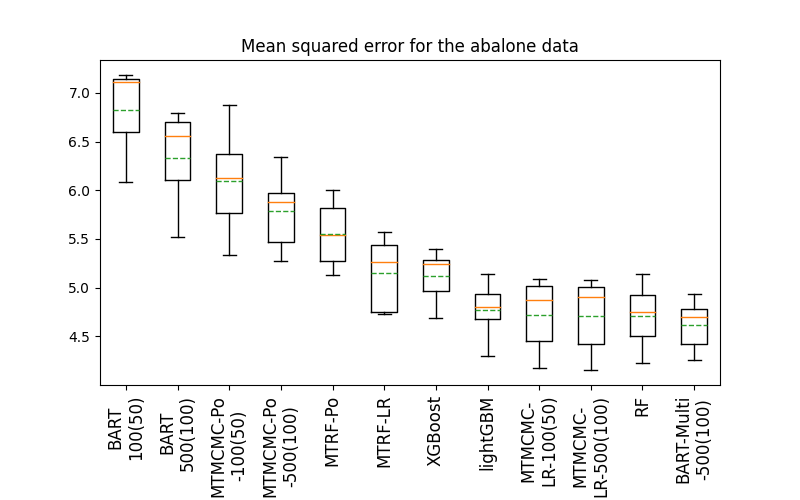}
\caption{The mean squared error for the abalone ages \cite{UCIrepo} (in order of the average of the mean squared error).}
\label{result_of_experiment_abalone_revised}
\end{minipage}
\end{figure}

\textbf{Results:} Figures \ref{result_of_experiment_3_revised} and \ref{result_of_experiment_abalone_revised} show the box plot of the prediction error ratio for each validation data of the Titanic and the mean squared error for each validation data of the abalones, respectively. On average, our method showed comparable performance with state-of-the-art methods such as XGBoost and LightGBM.
Although the sum of tree models (BART-Multi) showed the best performance in Fig.\ \ref{result_of_experiment_abalone_revised}, the gap between our method and the sum of tree models can be decreased by extending our model to a sum of meta-tree models. 

Another interesting insight from these results would be the difference that comes from the number of MCMC iterations. For BART, decreasing the number of iterations, the error ratio and the mean squared error were increased. In contrast, those of our method were not so increased. This indicates the efficiency of our sampling method compared with that of BART.

Regarding the computational cost of our method, using Python on a normal labtop, each MCMC iteration on a single chain requires approximately 46 msec for abalone data, where $K=2$, $p=8$, $q=3$, $D_\mathrm{max}=10$, and LR models are assumed on the leaf nodes (the most complex setting in this experiment). For more detail, see the supplementary materials. Note that our motivation for the Bayes optimal prediction is to solve the overfitting for small data, and scalability is less important.

In summary, as shown in the above result, our model flexibly expresses the data observation processes by assuming different models on the leaf nodes. It supports both categorical and continuous objective variables with categorical, continuous, or mixed explanatory variables. Moreover, we can prevent overfitting because their parameters can be learned Bayes optimally. Further, the computational cost is not so expensive. Therefore, we believe that our method should be at least a possible choice for real-world tasks.

% %%%%%%%%%%%%%%%%%%%%%%%%%%%%%%%%
% \section{Conclusion}

% In this paper, the stochastic data observation processes $p(y | \bm x, \bm k, T, \bm \theta)$ were represented by trees $T$, whose inner nodes and leaf nodes have feature assignment vector $\bm k$ and parameters $\bm \theta$, respectively. This is in contrast to the previous function tree based methods such as \cite{RF}, \cite{XGBoost}, and \cite{lightGBM}. Therefore, we are able to consider the Bayes optimal prediction for a new data point. The Bayes optimal prediction for this model requires the summations and integrals with respect to all of $\bm k$, $T$, and $\bm \theta$. We focused on calculating the efficient summation for $\bm k$, which had not been established in \cite{MTRF}. We proposed a MCMC method called the meta-tree Markov chain Monte Carlo method, whose step size was adaptively tuned based on the tree posterior $p(T | \bm x^n, y^n, \bm k)$. Thus, our method is also regarded as an idea to accelerate or leverage the MCMC method of \cite{BayesianCART, BART}. Its effectiveness was confirmed by the experiments both on the synthetic data and the real-world example.

% \bibliographystyle{unsrt}
% \bibliography{mtmcmc_refs}

% %%%%%%%%%%%%%%%%%%%%%%%%%%%%%%%%%%%%%%%%%%%%%%%%%%%%%%%%%%%%%%%%%%%%%%%%%%%%%%%
% %%%%%%%%%%%%%%%%%%%%%%%%%%%%%%%%%%%%%%%%%%%%%%%%%%%%%%%%%%%%%%%%%%%%%%%%%%%%%%%
% % APPENDIX
% %%%%%%%%%%%%%%%%%%%%%%%%%%%%%%%%%%%%%%%%%%%%%%%%%%%%%%%%%%%%%%%%%%%%%%%%%%%%%%%
% %%%%%%%%%%%%%%%%%%%%%%%%%%%%%%%%%%%%%%%%%%%%%%%%%%%%%%%%%%%%%%%%%%%%%%%%%%%%%%%
% \newpage
\clearpage
\appendix
% \onecolumn

\section{Proof of Proposition \ref{OptimalPrediction}}
From Bayes' theorem, we have
\begin{align}
\mathrm{BR}  (\delta) = \int_{\mathcal{Y}^n} \Big( \int_{\mathcal{Y}} p(y_{n+1} | \bm x_{n+1}, \bm x^n, y^n) \ell_{0-1}(\delta (\bm x_{n+1}, \bm x^n, y^n), y_{n+1}) \mathrm{d}y_{n+1} \Big) p(y^n | \bm x^n) \mathrm{d}y^n, \label{ModifiedBayesRisk}
\end{align}
where
\begin{align}
p(y_{n+1} | \bm x_{n+1}, \bm x^n, y^n) \coloneqq \sum_{\bm k \in \mathcal{K}} \sum_{T \in \mathcal{T}} \int_{\bm \Theta} p(y_{n+1} | \bm x_{n+1}, \bm \theta, T, \bm k) p(\bm \theta, T, \bm k | \bm x^n, y^n) \mathrm{d} \bm \theta. \label{PredictiveDistribution}
\end{align}
If we find the decision function $\delta$ that minimizes the brackets in \eqref{ModifiedBayesRisk}, then this is the optimal decision $\delta^*$, and it is easy to see that
\begin{align}
\delta^* & (\bm x_{n+1}, \bm x^n, y^n)=\argmax_{y_{n+1} \in \mathcal{Y}} p(y_{n+1} | \bm x_{n+1}, \bm x^n, y^n). \label{Optimal_a}
\end{align}

Hence, from \eqref{PredictiveDistribution} and \eqref{Optimal_a}, we complete the proof of Proposition \ref{OptimalPrediction}.

\section{Proof of Proposition \ref{MCMCValidity}}

Herein, we prove Proposition \ref{MCMCValidity} in general, i.e., we consider a general process to obtain an MCMC sample $\{ z^{(t)} \}_{t=1}^{t_\mathrm{end}}$ from an objective distribution $p(z)$ by the MH algorithm. (In our case, $z$ and $p(z)$ should be replaced with $\bm k$ and $p(\bm k | \bm x^n, y^n)$, respectively.) In the $t$th iteration of the MH algorithm, $z^*$ is generated from a proposal distribution $q(z^* | z^{(t-1)})$. Then, it will be accepted according to the following acceptance probability
\begin{align}
A (z^*, z^{(t-1)}) \coloneqq \min \left\{ 1, \frac{p(z^*) q(z^{(t-1)} | z^*)}{p(z^{(t-1)}) q(z^* | z^{(t-1)})} \right\}.
\end{align}
If $z^*$ is accepted, we make $z^{(t)} \leftarrow z^*$, otherwise $z^{(t)} \leftarrow z^{(t-1)}$. 

According to the above process, the transition probability $T(z^{(t)} | z^{(t-1)})$ is represented as follows.
\begin{align}
    T(z^{(t)} | z^{(t-1)}) = \begin{cases}
        q(z^{(t)} | z^{(t-1)}) A(z^{(t)}, z^{(t-1)}), & \! z^{(t)} \neq z^{(t-1)}\\
        1 \!-\!  \sum_{z \neq z^{(t)}} q(z | z^{(t-1)}) A(z, z^{(t-1)}), & \! z^{(t)} = z^{(t-1)}\\
    \end{cases}
\end{align}
Therefore, if $q(z^* | z^{(t-1)})$ is time-invariant\footnote{For $t \neq t'$, if $z^{(t-1)} = z^{(t'-1)}$ holds, then $q(z^* | z^{(t-1)}) = q(z^* | z^{(t'-1)})$ holds for any $z^*$.}, then $T(z^{(t)} | z^{(t-1)})$ is also time-invariant. Moreover, if $q(z^* | z^{(t-1)}) > 0$ holds for any $z^*$ and $z^{(t-1)}$, the Markov chain of $z^{(t)}$ is ergodic.

It is known that any time-invariant and ergodic Markov chain has a unique stationary distribution $p^*(z)$. It is also known that if any distribution $\tilde{p}(z)$ satisfies the following condition called detailed balance,
\begin{align}
    \tilde{p}(z) T(z'|z) = \tilde{p}(z') T(z|z') \quad \text{for any } z \text{ and } z', \label{DetailedBalance}
\end{align}
then $\tilde{p}(z)$ is the stationary distribution, i.e., $\tilde{p}(z) = p^*(z)$. (It is a sufficient condition but not a necessary condition.)

Then, we prove the objective distribution $p(z)$ is the stationary distribution of the Markov chain whose transition probability is $T(z^{(t)}|z^{(t-1)})$ by showing the objective distribution satisfies the detailed balance. If $z = z'$, the equation in \eqref{DetailedBalance} clearly holds. If $z \neq z'$, we have
\begin{align}
    p(z) T(z'|z) &= p(z) q(z'|z) A(z',z) \\
    &= p(z) q(z'|z)  \min \left\{ 1, \frac{p(z') q(z | z')}{p(z) q(z' | z)} \right\} \\
    &= \min \{ p(z) q(z'|z), p(z') q(z | z') \} \\
    &= p(z') q(z | z') \min \left\{ \frac{p(z) q(z'|z)}{p(z') q(z | z')}, 1 \right\} \\
    &= p(z') q(z | z') A(z,z') \\
    &= p(z') T(z|z').
\end{align}
Therefore, the objective distribution $p(z)$ is the stationary distribution of Markov chain induced from the aforementioned process. Consequently, the empirical distribution $\{z^{(t)}\}_{t=1,2,\dots}$ converges to the objective distribution $p(z)$ after sufficient iteration. 

\section{Tuning Algorithm of Additional Parameter of Meta-Tree Markov Chain Monte Carlo Methods}

The additional parameter $\bar{g}$ in \eqref{Probability of T^(t)} should be tuned in the burn-in phase. We
did it by Algorithm \ref{tuning_g_bar}. In our experiments, we set $r_\mathrm{obj} = 0.3$, $\rho = 0.99$, and $\phi = 0.999$.
\begin{algorithm}[htb]
\caption{Tuning algorithm of $\bar{g}$}
\label{tuning_g_bar}
\begin{algorithmic}[1]
\REQUIRE $r_\mathrm{obj} \in [0,1]$, $\rho \in [0,1]$, $\phi \in [0,1]$
\ENSURE $\bar{g} \in [0,1]$
\STATE $\bar{g} \leftarrow 0$
\STATE $N_\mathrm{accept} \leftarrow 1$
\STATE $N_\mathrm{propose} \leftarrow 1$
\STATE $N'_\mathrm{propose} \leftarrow 1$
\WHILE {Burn-in phase}
    \STATE Propose $\bm k^*$
    \IF {$\bm k^*$ is accepted}
        \STATE $N_\mathrm{accept} \leftarrow \rho N_\mathrm{accept} + 1$
    \ELSE
        \STATE $N_\mathrm{accept} \leftarrow \rho N_\mathrm{accept}$
    \ENDIF
    \STATE $N_\mathrm{propose} \leftarrow \rho N_\mathrm{propose} + 1$
    \STATE $\hat{r} \leftarrow N_\mathrm{accept} / N_\mathrm{propose}$
    \IF {$\hat{r} > r_\mathrm{obj}$}
        \STATE $\hat{g}_\mathrm{tmp} \leftarrow \hat{g} \cdot r_\mathrm{obj} / \hat{r}$
    \ELSE
        \STATE $\hat{g}_\mathrm{tmp} \leftarrow 1 - (1-\hat{g}) (1-r_\mathrm{obj}) / (1-\hat{r})$
    \ENDIF
    \STATE $N'_\mathrm{propose} \leftarrow \phi N'_\mathrm{propose} + 1$
    \STATE $\hat{g} \leftarrow (\phi \hat{g} + \hat{g}_\mathrm{tmp}) / N'_\mathrm{propose}$
\ENDWHILE
\STATE \textbf{return} $\hat{g}$
\end{algorithmic}
\end{algorithm}

\section{Other Examples of Proposal Distributions}

We show other examples of the proposal distributions of $\bm k^*$. Their effectiveness will be numerially compared in the next section.

\subsection{Uniform Proposal Distribution}

For comparison, we utilize the uniform distribution on $\mathcal{K}$ as the proposal distribution $q(\bm k^* | \bm k^{(t-1)}) = (p+q)^{-|\mathcal{I}_\mathrm{max}|}$. For this type of proposal distribution, the acceptance probability that satisfies the detailed balance is derived as follows:
\begin{align}
A(\bm k^*, \bm k^{(t-1)}) = \min \left\{ 1, \frac{p(y^n | \bm x^n, \bm k^*)}{p(y^n | \bm x^n, \bm k^{(t-1)})} \right\}. \label{AcceptanceForUniform}
\end{align}

\subsection{Tree Prior Based Proposal Distribution}

We can utilize the tree prior \eqref{DistributionTree} to generate $\tilde{T}$ instead of \eqref{Probability of T^(t)}. Then, the proposal distribution $q(\bm k^* | \bm k^{(t-1)})$ is represented as follows:
\begin{align}
q(\bm k^* | \bm k^{(t-1)}) = p(\tilde{T}) (p+q-1)^{-|\mathcal{L}_{\tilde{T}} \cap \mathcal{I}_\mathrm{max}|} (p+q)^{-|\mathcal{I}_\mathrm{max} \backslash \mathcal{S}_{\tilde{T}}|}. \label{PriorProposal}
\end{align}
The acceptance probability that satisfies the detailed balance for this proposal distribution is the same as \eqref{AcceptanceForUniform}.

\subsection{Other Examples of Tree Posterior Based Proposal Distribution}

In \eqref{Probability of T^(t)}, we truncated the hyperparameter $g_{s|\bm x^n, y^n, \bm k^{(t-1)}}$ by $\bar{g}$ to ensure the ergodicity and induce a jump. We also utilize a reduced one, such as,
\begin{align}
q(\tilde{T} | \bm x^n, y^n, \bm k^{(t-1)}) = \prod_{s \in \mathcal{I}_{\tilde{T}}} \alpha g_{s|\bm x^n, y^n, \bm k^{(t-1)}} \prod_{s' \in \mathcal{L}_{\tilde{T}}} (1-\alpha g_{s'|\bm x^n, y^n, \bm k^{(t-1)}}), \label{ProposalReduce}
\end{align}
where $\alpha$ is in the range of $[0, 1]$.

Further, not only reducing the large $g_{s|\bm x^n, y^n, \bm k^{(t-1)}}$, we can also amplify the small $g_{s|\bm x^n, y^n, \bm k^{(t-1)}}$ as follows.
\begin{align}
&q(\tilde{T} | \bm x^n, y^n, \bm k^{(t-1)}) \nonumber \\
&= \prod_{s \in \mathcal{I}_{\tilde{T}}} \bigl( (g_s + \alpha (g_{s|\bm x^n, y^n, \bm k^{(t-1)}} - g_s) \bigr) \prod_{s' \in \mathcal{L}_{\tilde{T}}} \Bigl(1- \bigl(g_{s'} + \alpha (g_{s'|\bm x^n, y^n, \bm k^{(t-1)}} - g_{s'}) \bigr) \Bigr), \label{ProposalAmplify}
\end{align}
where $g_s$ is the hyperparameter of the prior \eqref{DistributionTree}.

For \eqref{ProposalReduce} and \eqref{ProposalAmplify}, the acceptance probability is defined in a similar manner to \eqref{acceptance}. Many terms in the numerator and the denominator of \eqref{acceptance} are canceled in a similar manner to Remark \ref{cancel}. We can tune $\alpha$ in the same algorithm as Algorithm \ref{tuning_g_bar}.

\section{Computationally Efficient Proposal Distribution}

As described in Remark \ref{cancel}, we can efficiently evaluate the acceptance probability \eqref{acceptance} by calculating $q(\tilde{T} | \bm x^n, y^n, \bm k^{(t-1)})$ and $q(\tilde{T} | \bm x^n, y^n, \bm k^*)$, which can be obtained from $g_{s|\bm x^n, y^n, \bm k^{(t-1)}}$ and $g_{s|\bm x^n, y^n, \bm k^*}$ only for $s \in \mathcal{S}_{\tilde{T}}$. Therefore, the computational cost to evaluate the acceptance probability is $O(|\mathcal{S}_{\tilde{T}}|)$. However, to sample $\bm k^*$, we have to remember $k_s^{(t-1)}$ for all the node $s \in \mathcal{S}_{T_\mathrm{max}}$. It is because $s \in \mathcal{L}_{\tilde{T}}$ holds with non-zero probability for all $s \in \mathcal{S}_{T_\mathrm{max}}$, and if $s \in \mathcal{L}_{\tilde{T}}$ holds, then $k_s^*$ must be different from $k_s^{(t-1)}$. In this section, we describe a method to reduce this complexity. Although the explanation is based on the proposal distribution \eqref{TruncatedProposal} in the main article, this method is also applied for other proposal distributions based on \eqref{ProposalReduce} or \eqref{ProposalAmplify}.

First, let $T_{\bm x^n, \bm k}$ denote the minimal tree that contains the paths from the root node $s_\lambda$ to the leaf node $s_{\bm k, T_\mathrm{max}}(\bm x_i)$ for all $i \in \{ 1, 2, \dots , n \}$. In other words, $T_{\bm x^n, \bm k}$ is the minimal tree used during the observation of $y^n$ for given $\bm x^n$ and $\bm k$. Since the length of these paths is $D_\mathrm{max}$, $|\mathcal{S}_{T_{\bm x^n, \bm k}}| \leq n D_\mathrm{max}$ holds.

Next, we define the following parameter for all $s \in \mathcal{S}_{T_\mathrm{max}}$:
\begin{align}
    \tilde{g}_{s|\bm x^n, y^n, \bm k} \coloneqq
    \begin{cases}
        \min \{ g_{s|\bm x^n, y^n, \bm k}, \bar{g} \}, & s \in \mathcal{I}_{T_{\bm x^n, \bm k}}, \\
        0, & \mathrm{otherwise}.
    \end{cases}
\end{align}

Then, we generate $\tilde{T}$ according to the following distribution instead of \eqref{Probability of T^(t)}.
\begin{align}
    q(\tilde{T} | \bm x^n, y^n, \bm k^{(t-1)}) = \prod_{s \in \mathcal{I}_{\tilde{T}}} \tilde{g}_{s|\bm x^n, y^n, \bm k^{(t-1)}} \prod_{s' \in \mathcal{L}_{\tilde{T}}} (1-\tilde{g}_{s'|\bm x^n, y^n, \bm k^{(t-1)}}).
\end{align}

Lastly, we generate $\bm k^*$ as follows. For $s \in \mathcal{I}_{\tilde{T}}$, $k_s^{(t-1)}$ is fixed and $k_s^* = k_s^{(t-1)}$. For $s \in \mathcal{L}_{\tilde{T}} \cap \mathcal{I}_{T_{\bm x^n, \bm k^{(t-1)}}}$, $k_s^{(t-1)}$ is changed according to the uniform distribution on $\{ 1, 2, \dots , p+q \} \backslash \{ k_s^{(t-1)} \}$. For the other nods, $k_s^{(t-1)}$ is changed according to the uniform distribution on $\{ 1, 2, \dots, p+q \}$.

Therefore, we do not require $k_s^{(t-1)}$ on $s \notin \mathcal{I}_{\tilde{T}_{\bm x^n, \bm k^{(t-1)}}}$ to generate $\bm k^*$. Moreover, since $\tilde{T}$ is uniquely determined from $\bm k^{(t-1)}$ and $\bm k^*$, $q(\bm k^* | \bm k^{(t-1)})$ is represented as follows.
\begin{align}
    q(\bm k^* | \bm k^{(t-1)}) = q(\tilde{T} | \bm x^n, y^n, \bm k^{(t-1)}) (p+q-1)^{-\left| \mathcal{L}_{\tilde{T}} \cap \mathcal{I}_{T_{\bm x^n, \bm k^{(t-1)}}} \right|} (p+q)^{\left|\mathcal{I}_\mathrm{max} \backslash \left(\mathcal{S}_{\tilde{T}} \cap \mathcal{I}_{T_{\bm x^n, \bm k^{(t-1)}}}\right) \right|}. \label{lean_proposal_distribution}
\end{align}

Further, $\mathcal{I}_{\tilde{T}} \subset \mathcal{I}_{T_{\bm x^n, \bm k^*}}$ holds for any $\bm k^*$ generated through $\tilde{T}$. Similarly, $(\mathcal{L}_{\tilde{T}} \cap \mathcal{I}_{T_{\bm x^n, \bm k^{(t-1)}}}) = (\mathcal{L}_{\tilde{T}} \cap \mathcal{I}_{T_{\bm x^n, \bm k^*}})$ and $(\mathcal{S}_{\tilde{T}} \cap \mathcal{I}_{T_{\bm x^n, \bm k^{(t-1)}}}) = (\mathcal{S}_{\tilde{T}} \cap \mathcal{I}_{T_{\bm x^n, \bm k^*}})$ hold. Therefore, the transition from $\bm k^*$ to $\bm k^{(t-1)}$ cannot occur through any tree other than $\tilde{T}$, and we can cancel the denominator and the numerator of the acceptance probability in a similar manner to Remark \ref{cancel}.

In summary, by using the proposal distribution in \eqref{lean_proposal_distribution}, we can sample $\bm k^*$ without using $k_s^{(t-1)}$ on $s \notin \mathcal{I}_{T_{\bm x^n, \bm k^{(t-1)}}}$ and we can evaluate $q(\tilde{T} | \bm x^n, y^n, \bm k^{(t-1)})$ and $q(\tilde{T} | \bm x^n, y^n, \bm k^*)$ using only the parameters on the nodes in $\mathcal{S}_{\tilde{T}}$.

\section{Complexity Analysis of Meta-Tree Markov Chain Monte Carlo Methods}

In this section, we summarize the computational complexity of MTMCMC methods. 
First, let $y_{s|\bm k}$ denote the set of objective variables of data points that pass through $s$ in the data generating process for given $\bm k$ and $T_\mathrm{max}$. Therefore, $\bigcup_{s \in \mathcal{L}(T)} y_{s|\bm k} = y^n$ holds for any $T$ in the meta-tree $M_{T_\mathrm{max}, \bm k}$. In each iteration of the MTMCMC methods, we have to do the following procedure.
\begin{enumerate}
    \item Calculate $\int p(y_{s|\bm k^{(t-1)}} | \theta_s) p(\theta_s) \mathrm{d}\theta_s$ for all $s \in \mathcal{S}_{T_{\bm x^n, \bm k^{(t-1)}}}$.
    \item Calculate $p(T | \bm x^n, y^n, \bm k^{(t-1)})$.
    \item Generate $\bm k^*$ according to $q(\bm k^* | \bm k^{(t-1)})$.
    \item Evaluate $A(\bm k^*, \bm k^{(t-1)})$.
\end{enumerate}
After $t_\mathrm{end}$ iterations, we calculate \eqref{ApproximatedBayesOptimal}. In the following, we evaluate the computational complexity of these procedures.

\textbf{Calculation of $\int p(y_{s|\bm k^{(t-1)}} | \theta_s) p(\theta_s) \mathrm{d}\theta_s$ for all $s \in \mathcal{S}_{T_{\bm x^n, \bm k^{(t-1)}}}$:} we consider the worst case where $T_{\bm x^n, \bm k^{(t-1)}} = T_\mathrm{max}$. For each node $s \in \mathcal{S}_{T_\mathrm{max}}$, the computational cost to calculate $\int p(y_{s|\bm k^{(t-1)}} | \theta_s) p(\theta_s) \mathrm{d}\theta_s$ is usually proportional to the number of data points when $p(y_{s|\bm k^{(t-1)}}|\theta_s)$ is an usual exponential family distribution. Although the number of data points assigned to each node $s$ depends on $\bm k^{(t-1)}$, the sum of the number of data points assigned to all the nodes at each depth is always $n$. Therefore, the computational cost to calculate $\int p(y_{s|\bm k^{(t-1)}} | \theta_s) p(\theta_s) \mathrm{d}\theta_s$ for all $s \in \mathcal{S}_{T_\mathrm{max}}$ is $O(n D_\mathrm{max})$. We can calculate $p(\theta_s | y_{s|\bm k^{(t-1)}})$ simultaneously.

\textbf{Calculation of $p(T | \bm x^n, y^n, \bm k^{(t-1)})$:} using the method in \cite{batch_updating}, we can calculate $p(T | \bm x^n, y^n, \bm k^{(t-1)})$ with a complexity of  $O(|\mathcal{S}_{T_{\bm x^n, \bm k^{(t-1)}}}|)$. Note that $|\mathcal{S}_{T_{\bm x^n, \bm k^{(t-1)}}}| \leq n D_\mathrm{max}$ always holds. We can calculate $p(y^n | \bm x^n, \bm k^{(t-1)})$ simultaneously.

\textbf{Generation of $\bm k^*$ according to $q(\bm k^* | \bm k^{(t-1)})$:} using the proposal distribution described in the previous section, we need not generate $k_s^*$ for $s \notin \mathcal{S}_{T_{\bm x^n, \bm k^*}}$. Therefore, the computational complexity is $O(|\mathcal{S}_{T_{\bm x^n, \bm k^*}}|)$.

\textbf{Evaluation of $A(\bm k^*, \bm k^{(t-1)})$:} to evaluate $A(\bm k^*, \bm k^{(t-1)})$, we have to calculate $p(y^n | \bm x^n, \bm k^{(t-1)})$, $p(y^n | \bm x^n, \bm k^*)$, $q(\tilde{T}|\bm x^n, y^n, \bm k^{(t-1)})$, and $q(\tilde{T}|\bm x^n, y^n, \bm k^*)$. $p(y^n | \bm x^n, \bm k^{(t-1)})$ is already calculated. $p(y^n | \bm x^n, \bm k^*)$ can be calculated in a similar manner to $p(y^n | \bm x^n, \bm k^{(t-1)})$ by using the method in \cite{batch_updating}. Its computational complexity is $O(|\mathcal{S}_{T_{\bm x^n, \bm k^*}}|)$. The computational complexity to calculate $q(\tilde{T}|\bm x^n, y^n, \bm k^{(t-1)})$ is $O(|\mathcal{S}_{\tilde{T}}|)$ as described in the previous section. When using the proposal distribution described in the previous section, $O(|\mathcal{S}_{\tilde{T}}|) = O(|\mathcal{S}_{T_{\bm x^n, \bm k^{(t-1)}}}|)$. The computational cost to calculated $q(\tilde{T}|\bm x^n, y^n, \bm k^*)$ is similarly evaluated.

\textbf{Calculation of \eqref{ApproximatedBayesOptimal}:} at this point, $p(\theta_s | y_{s|\bm k^{(t)}})$ and $p(T | \bm x^n, y^n, \bm k^{(t)})$ are already calculated for all $t \in \{ 1, 2, \dots , t_\mathrm{end} \}$. Therefore, by using the method in \cite{suko_alg,MTRF}, we can calculate $\tilde{q}(y_{n+1} | \bm x_{n+1}, \bm x^n, y^n, \bm k^{(t)})$, i.e., \eqref{q} and \eqref{tildeq}, with a complexity of $O(D_\mathrm{max})$. To calculate \eqref{ApproximatedBayesOptimal}, we have to take summation of them for $t \in \{ 1, 2, \dots , t_\mathrm{end} \}$. Therefore, the complexity is $O(t_\mathrm{end} D_\mathrm{max})$

Consequently, the total complexity of the MTMCMC method is roughly $O(t_\mathrm{end} n D_\mathrm{max})$.

\section{Extension to Replica Exchange Monte Carlo Methods}

In this section, we extend our MH method to REMC methods (e.g., \cite{ReplicaExchange}) to deal with multimodality of the posterior distribution. First, we define the following joint distribution over $\mathcal{K}^J$ for $J \in \mathbb{N}$.
\begin{align}
    q(\bm k_1, \bm k_2, \dots , \bm k_J) \coloneqq \prod_{j=1}^J q_j(\bm k_j) \coloneqq \prod_{j=1}^J \frac{p(\bm k_j | \bm x^n, y^n)^{\beta_j}}{\sum_{\bm k_j \in \mathcal{K}} p(\bm k_j | \bm x^n, y^n)^{\beta_j}},
\end{align}
where $0 \leq \beta_1 < \beta_2 < \cdots < \beta_J = 1$. Since $\beta_J = 1$, the marginal distribution $q_J(\bm k_J)$ is equivalent to the posterior distribution $p(\bm k | \bm x^n, y^n)$ required to calculate the Bayes optimal prediction. Therefore, we construct an MCMC method for this joint distribution $q(\bm k_1, \bm k_2, \dots , \bm k_J)$ and use the sample for only $\bm k_J$, ignoring those for $\bm k_1, \bm k_2, \dots , \bm k_{J-1}$.  In REMC methods, the sample from the joint distribution is obtained as follows:

\textbf{1.} For each $q(\bm k_j)$, run the MH method and obtain the sample $\bm k_j^{(1)}, \bm k_j^{(2)}, \dots$. The proposal distribution and the acceptance probability are similar to those in the usual MH method described in the main article.

\textbf{2.} Let $m \in \mathbb{N}$ be a predetermined number. For every $m$ iterations of the MH method, we randomly choose $j \in \{ 0, 1, \dots , J-1 \}$ and exchange $\bm k_j^{(t)}$ and $\bm k_{j+1}^{(t)}$ with probability
\begin{align}
    \frac{q_j(\bm k_{j+1}^{(t)})q_{j+1}(\bm k_j^{(t)})}{q_j(\bm k_j^{(t)})q_{j+1}(\bm k_{j+1}^{(t)})} = \frac{p(y^n | \bm x^n, \bm k_{j+1}^{(t)})^{\beta_j} p(y^n | \bm x^n, \bm k_j^{(t)})^{\beta_{j+1}}}{p(y^n | \bm x^n, \bm k_{j+1}^{(t)})^{\beta_{j+1}} p(y^n | \bm x^n, \bm k_j^{(t)})^{\beta_j}}, \label{RE_probability}
\end{align}
where we used the Bayes' theorem and Assumption \ref{Distribution_k}. (This procedure can be applied multiple times at the same $t$th iteration of the MH method.)

It is known that the above procedure satisfies the detailed balance condition and the obtained sample asymptotically follows $q(\bm k_1, \bm k_2, \dots , \bm k_J)$ after sufficient iterations. Since $\beta_j$ is monotonically increasing, the effect of multimodality of $p(\bm k | \bm x^n, y^n)$ is reduced for small $j$. Therefore, $\bm k_j^{(t)}$ for small $j$ tends to move over the multiple modes. Exchange these sample with probability \eqref{RE_probability}, the REMC methods ensure the detailed balance and provide the sample from the multiple modes.

\section{Detailed Conditions of Experiments in the Main Article}

\subsection{Detailed Condition of Experiment 1}

\textbf{MTMCMC:} $\bar{g}$ in \eqref{Probability of T^(t)} was adaptively tuned in the burn-in phase by the algorithm described in this supplementary material. The other hyperparameters, e.g., $D_\mathrm{max}$, $g_s$, etc., were the same as those used to generate the true model and data.

\textbf{MTRF:} The number of meta-trees used for the prediction was 100, which was the default value of the library \cite{bayesml}. The other hyperparameters, e.g., $D_\mathrm{max}$, $g_s$, etc., were the same as those used to generate the true model and data.

\textbf{RF:} The maximum depth was 10. The other hyperparameters were default values of the library \cite{scikit-learn}.

\textbf{XGBoost:} All the hyperparameters were default values of the library \cite{XGBoost}.

\textbf{LightGBM:} All the hyperparameters were default values of the library \cite{lightGBM}.

\textbf{BART:} We used the \textit{lbart} (logit BART) function implemented in \cite{BART_R}. The \texttt{ntree} option was set at 1 because the true model was represented by a single tree. The other hyperparameters were default values of the library.

\subsection{Detailed Condition of Experiment 2}

\subsubsection{Conditions for Classification}

\textbf{Data set:} The data set was about the sinking of the Titanic \cite{titanic_origin}. Each data point $(\bm x_i, y_i)$ was a pair of the information about the $i$th passenger and his or her survival. In other words, we performed a binary classification. The used explanatory variables are "pclass", "age", "sibsp", "parch", "fare", "sex", and "embarked". We encoded "sex" into 0 or 1, and "embarked" into 001, 010, and 100 (one-hot vectors). Then, the number of continuous features is $p=5$ and the number of categorical features is $q=4$. Missing values were filled with the mode of each variable. The sample size was 1309.

\textbf{MTMCMC:} We had $D_\mathrm{max} = 10$ and $g_s = 0.75$ for any $s \in \mathcal{S}_\mathrm{max}$. The distribution of $y$ assigned at each node $s$ and its prior distribution were assumed to be the Bernoulli distribution $\mathrm{Bern}(y|\theta_s)$ and the beta distribution $\mathrm{Beta}(\theta_s | 0.5, 0.5)$, respectively. $\bar{g}$ in \eqref{Probability of T^(t)} was fixed at 0.8. The number of replicas in the REMC method was 8. The replica exchange procedure was made every 10 iterations of the MH method. In each replica exchange procedure, randomly selected 4 replicas are sequentially tried to exchange.

\textbf{MTRF:} We had $D_\mathrm{max} = 10$ and $g_s = 0.75$ for any $s \in \mathcal{S}_\mathrm{max}$. The distribution of $y$ assigned at each node $s$ and its prior distribution were assumed to be the Bernoulli distribution $\mathrm{Bern}(y|\theta_s)$ and the beta distribution $\mathrm{Beta}(\theta_s | 0.5, 0.5)$, respectively. The number of meta-trees used for the prediction was 100, which was the default value of the library \cite{bayesml}.

\textbf{RF:} The maximum depth was 10. The other hyperparameters were default values of the library \cite{scikit-learn}.

\textbf{XGBoost:} All the hyperparameters were default values of the library \cite{XGBoost}.

\textbf{LightGBM:} The maximum depth of trees was fexed at 10. The other the hyperparameters were default values of the library \cite{lightGBM}. (This setting showed a better result than default maximum depth setting.)

\textbf{BART:} We used the \textit{lbart} (logit BART) function implemented in \cite{BART_R}. The \texttt{ntree} option was set at 1 for comparison with our method under the same condition. The other hyperparameters were default values of the library.

\subsubsection{Conditions for Regression}

\textbf{Data set:} 
The data set was about abalones from UCI repository \cite{UCIrepo}. Each data point $(\bm x_i, y_i)$ was a pair of physical measurements of abalones and its age. 
Therefore, the set of objective variable $\mathcal{Y}$ was $\mathbb{Z}_{\geq 0}$. We used all the explanatory variables. We encoded "Sex", which consists of "M", "F", and "I" (infant), into 001, 010, and 100 (one-hot vectors). Then, the number of continuous features is $p=7$ and the number of categorical features is $q=3$. (When we assume a linear regression model at each leaf node of model trees, we had $p=8$ because constant term was added.) The sample size was 4177.

\textbf{MTMCMC:} We had $D_\mathrm{max} = 10$ and $g_s = 0.75$ for any $s \in \mathcal{S}_\mathrm{max}$. In MTMCMC-Po, the distribution of $y$ assigned at each node $s$ and its prior distribution were assumed to be the Poisson distribution $\mathrm{Po}(y|\nu_s)$ and the gamma distribution $\mathrm{Gam}(\nu_s | 1, 1)$, respectively. In MTMCMC-LR, the distribution of $y$ assigned at each node $s$ and its prior distribution were assumed to be linear regression model $\mathcal{N}(y | \bm w_s^\top \bm x, \sigma_s^2)$ and the normal-gamma prior $\mathcal{N}(\bm w_s | \bm 0, \bm I / \sigma_s^2) \mathrm{Gam}(1 / \sigma_s^2 | 1, 1)$, respectively. $\bar{g}$ in \eqref{Probability of T^(t)} was tuned in the burn-in phase. The number of replicas in the REMC method was 8. The replica exchange procedure was made every 10 iterations of the MH method. In each replica exchange procedure, randomly selected 4 replicas are sequentially tried to exchange.

\textbf{MTRF:} We had $D_\mathrm{max} = 10$ and $g_s = 0.75$ for any $s \in \mathcal{S}_\mathrm{max}$. The number of meta-trees used for the prediction was 100, which was the default value of the library \cite{bayesml}. 

\textbf{RF:} The maximum depth was 10. The other hyperparameters were default values of the library \cite{scikit-learn}.

\textbf{XGBoost:} All the hyperparameters were default values of the library \cite{XGBoost}.

\textbf{LightGBM:} All the hyperparameters were default values of the library \cite{lightGBM}.

\textbf{BART:} We used the \textit{gbart} (generalized BART) function implemented in \cite{BART_R}. The \texttt{ntree} option was set at 1 for comparison with our method under the same condition. The other hyperparameters were default values of the library.

\textbf{BART-Multi:} BART with default \texttt{ntree} option.

\section{In-Depth Experiments on Algorithm Behavior}

\subsection{Experiment 3: Convergence to Exact Posterior}

\textbf{Purpose:} we confirm the convergence of the MCMC sample distribution to the exact posterior distribution. Since our proposal distributions satisfy the detailed balance and ergodicity, the approximated posteriors are expected to converge to the exact one. Further, we confirm the effectiveness of the design policy of the proposal distribution, compared with the uniform proposal distribution.

\textbf{Conditions:} to calculate the exact posterior distribution, we fix a true model with small $p$, $q$ and $D_\mathrm{max}$. Specifically, we perform the experiment under the following conditions. We assume $p=0$ and $q=5$. Therefore, all the explanatory variables are binary. $\mathcal{Y}$ is also the binary set $\{ 0, 1\}$. We assume $D_\mathrm{max} = 3$. Then, we have $|\mathcal{K}| = 5^7 = 78125$. Specific values of $\bm k$, $T$, and $\bm \theta$ are shown in Fig.\ \ref{true_model_in_experiment_1}. The data generative model $p(y | \theta_s)$ is the Bernoulli distribution $\mathrm{Bern}(y | \theta_s)$. The $i$th explanatory variable $\bm x_i$ is independently generated according to the uniform distribution on $\{ 0, 1\}^q$. Then, $y_i$ is generated from the model shown in Fig.\ \ref{true_model_in_experiment_1}. The sample size $n$ is 100 and the number of generated samples is 10.

\begin{figure}[tbp]
    \centering
    \includegraphics[width=0.4\linewidth]{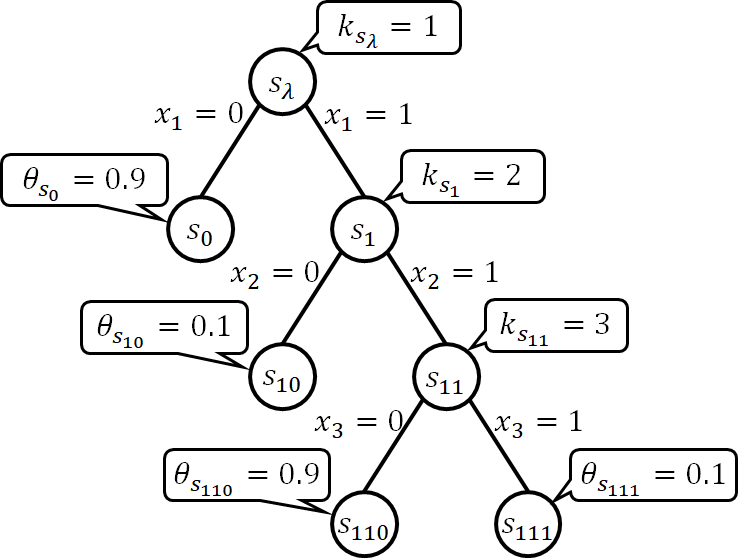}
    \caption{The true model assumed in the first experiment. Here, $\mathcal{Y} = \{ 0, 1\}$ and the parameters $\theta_s$ on the leaf nodes represent the probability that $y = 1$.}
    \label{true_model_in_experiment_1}
\end{figure}

For posterior learning, we independently assume the beta distribution $\mathrm{Beta}(\theta_s | 0.5, 0.5)$ as the prior distribution for each $\theta_s$. The hyperparameter of $p(T)$ is fixed at $g_s = 0.5$ for each $s \in \mathcal{I}_\mathrm{max}$. Herein, we utilize two proposal distributions: the uniform distribution and \eqref{TruncatedProposal}. The tuning parameter $\bar{g}$ in \eqref{Probability of T^(t)} of the tree posterior based proposal distribution $q(\bm k^* | \bm k^{(t-1)})$ is fixed at $0.75$. The burn-in length is 500 and the MCMC process is continued until 1000 samples are accepted.

\begin{figure}[tbp]
    \centering
    \includegraphics[width=0.5\linewidth]{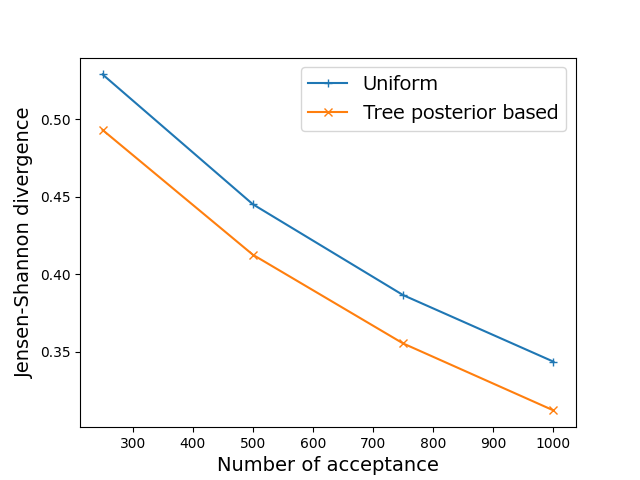}
    \caption{The Jensen-Shannon divergence between the exact posterior and the approximated posterior to the number of accepted tests of the MCMC.}
    \label{result_of_experiment_1}
\end{figure}

\textbf{Results:} we evaluate the distance $d(p, \hat{p})$ between the exact posterior distribution $p(\bm k | \bm x^n, y^n)$ and the approximated posterior distribution $\hat{p} (\bm k | \bm x^n, y^n)$ obtained from the MCMC sample by the following Jensen-Shannon divergence.\footnote{Since $\hat{p} (\bm k | \bm x^n, y^n)$ is a empirical distribution and takes 0 on some points in $\mathcal{K}$, the usual Kullback–Leibler divergence cannot be evaluated.}
\begin{align}
d(p, \hat{p}) \coloneqq \frac{1}{2} \sum_{\bm k \in \mathcal{K}} p(\bm k | \bm x^n, y^n) \log \frac{p(\bm k | \bm x^n, y^n)}{r(\bm k | \bm x^n, y^n)} + \frac{1}{2} \sum_{\bm k \in \mathcal{K}} \hat{p}(\bm k | \bm x^n, y^n) \log \frac{\hat{p}(\bm k | \bm x^n, y^n)}{r(\bm k | \bm x^n, y^n)},
\end{align}
where $r(\bm k | \bm x^n, y^n) \coloneqq (p(\bm k | \bm x^n, y^n) + \hat{p}(\bm k | \bm x^n, y^n))/ 2$ and we use the convention that $0 \log 0 = 0$.

\begin{figure*}
    \begin{subfigmatrix}{3}
    \subfigure[Model A]{\includegraphics[width=0.2\linewidth]{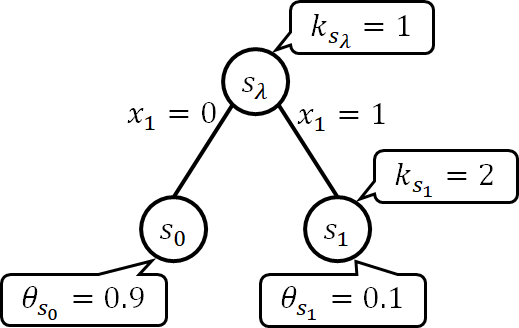}}
    \subfigure[Model B]{\includegraphics[width=0.25\linewidth]{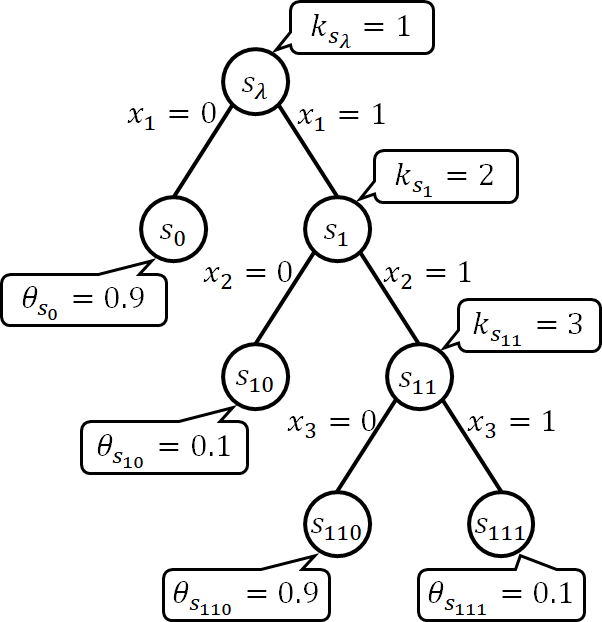}}
    \subfigure[Model C]{\includegraphics[width=0.5\linewidth]{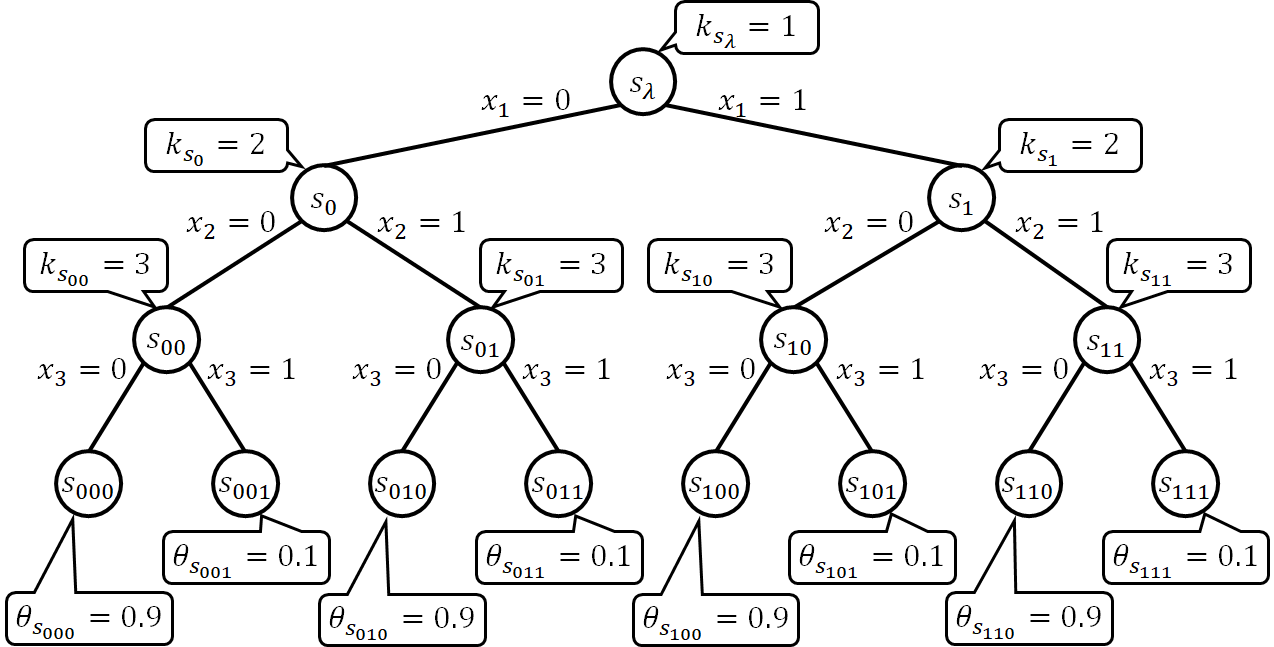}}
    \subfigure[Results for Model A]{\includegraphics[width=0.32\linewidth]{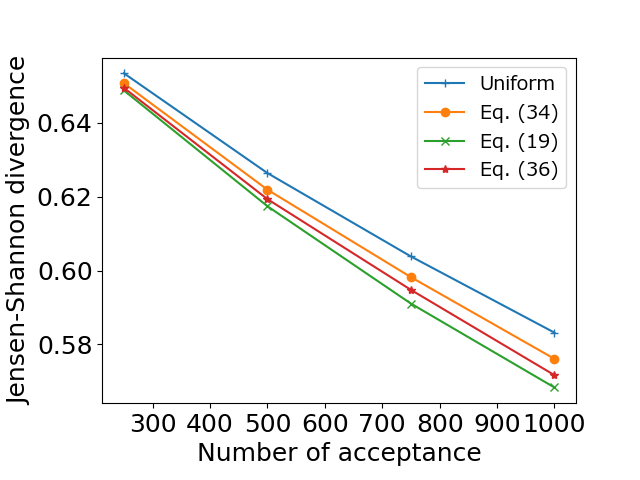}}
    \subfigure[Results for Model B]{\includegraphics[width=0.32\linewidth]{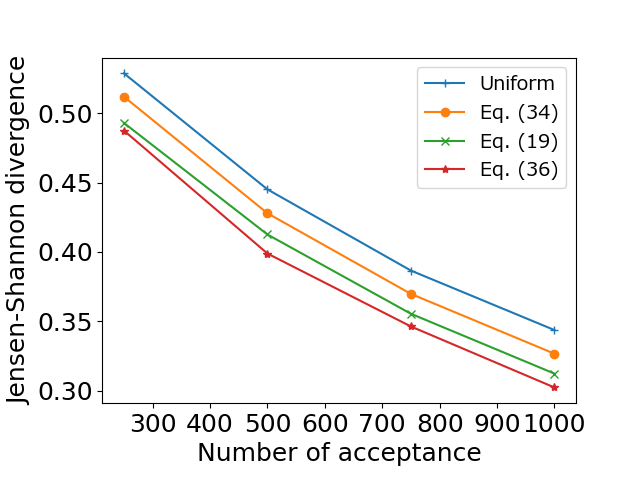}}
    \subfigure[Results for Model C]{\includegraphics[width=0.32\linewidth]{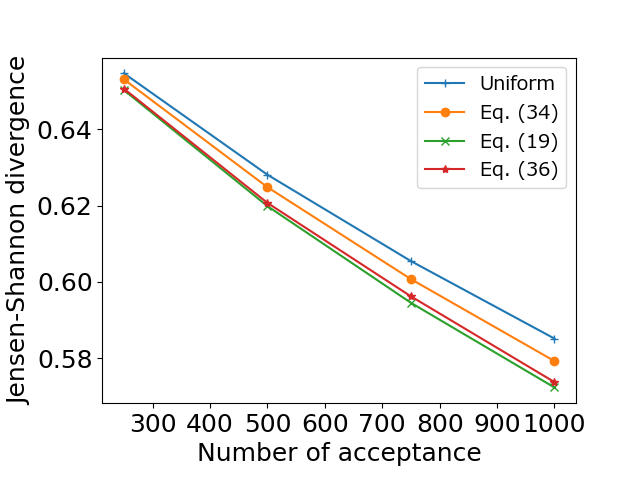}}
    \end{subfigmatrix}
    \caption{The models assumed in the experiment and the results of the Jensen-Shannon divergence for each proposal distributions.}
    \label{SupplementToExperiment1}
\end{figure*}

Figure \ref{result_of_experiment_1} shows the transition of the distance $d(p, \hat{p})$ for the increase of the number of the accepted tests. Both of the approximated posteriors converge to the exact one as expected. The convergence speed of the tree posterior based proposal distribution is faster than that of the uniform proposal distribution. In addition, the acceptance ratio of the tree posterior based proposal distribution was 0.274, while that of the uniform proposal distribution was 0.0118. These results support the effectiveness of our design policy of the proposal distribution. We also obtained similar results for other data generative models described in the next subsection.

\subsection{Experiment 4: Comparison of Convergence}

\begin{table}
    \centering
    \caption{Acceptance ratios for the compared proposal distributions.}
    \label{tab:acceptance_ratio}
    \begin{tabular}{l|cccc}
        \hline
        & \multicolumn{3}{c}{Acceptance ratio}\\
        $q(\bm k^* | \bm k^{(t-1)})$ & Model A & Model B & Model C \\
        \hline
        $(p+q)^{-|\mathcal{I}_\mathrm{max}|}$ & 0.11 & 0.012 & 0.11 \\
        Eq.\ \eqref{PriorProposal} & 0.27 & 0.073 & 0.29 \\
        Eq.\ \eqref{TruncatedProposal}  & 0.49 & 0.27 & 0.50 \\
        Eq.\ \eqref{ProposalAmplify} & 0.46 & 0.25 & 0.46 \\
        \hline
    \end{tabular}
\end{table}

We compare the convergence of the MCMC sample distribution obtained from the aforementioned four proposal distributions $q(\bm k^* | \bm k^{(t-1)})$: the uniform distribution, Eq.\ \eqref{PriorProposal}, Eq.\ \eqref{TruncatedProposal} and Eq.\ \eqref{ProposalAmplify}. We assumed three models shown in the upper side of Fig.\ \ref{SupplementToExperiment1}. Herein, we assumed $p=0$. The other hyperparameters are the same as those for Experiment 3. Resuls are shown in the lower side of Fig.\ \ref{SupplementToExperiment1} and Table \ref{tab:acceptance_ratio}. The tree posterior based proposal distributions \eqref{TruncatedProposal} and \eqref{ProposalAmplify} showed better performances, i.e., they showed faster convergence and higher acceptance ratio than the others. In particular, the uniform proposal distribution and Eq.\ \eqref{PriorProposal} showed extremely low acceptance ratio for Model B, which has an unbalanced shape.

\subsection{Experiment 5: Confirmation of Likelihood Behavior}

We confirm the behavior of our MCMC method from a perspective of likelihood. If our design policy of the proposal distribution $q(\bm k^* | \bm k^{(t-1)})$ works, the likelihood $p(y^n | \bm x^n, \bm k^{(t)})$ should increase in the early phase of the MCMC iterations and stay high.

Actually, we observed the desirable behavior for the real-world example \cite{titanic_origin} used in Experiment 2 in the main paper as shown in Fig.\ \ref{titanic_log_likelihood}.

\begin{figure}
\centering
\includegraphics[width=0.7\linewidth]{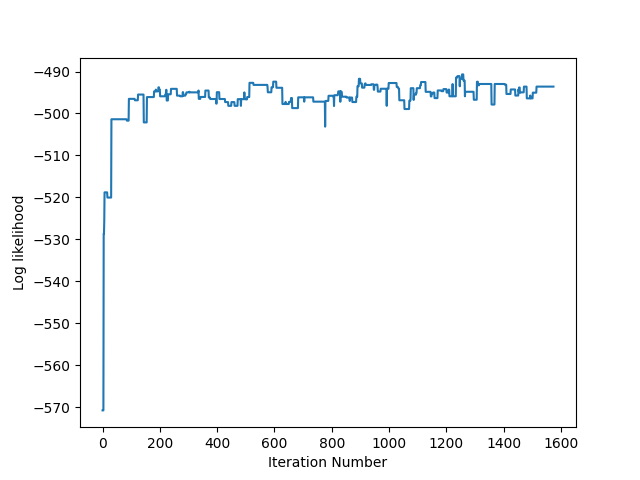}
\caption{Behavior of the log likelihood for the Titanic data \cite{titanic_origin}}
\label{titanic_log_likelihood}
\end{figure}

\section{Computing Resources}

The main computing resources used in our experiments are as follows:
\begin{itemize}
    \item For Experiment 1
    \begin{itemize}
        \item Desktop 1
        \begin{itemize}
            \item CPU:	Intel(R) Xeon(R) Gold 6128 CPU @ 3.40GHz
            \item Memory: 64GB
            \item OS: Windows 10 Pro
        \end{itemize}
        \item Desktop 2
        \begin{itemize}
            \item CPU:	Intel(R) Core(TM) i7-8700 CPU @ 3.20GHz
            \item Memory: 64GB
            \item OS: Windows 10 Pro
        \end{itemize}
    \end{itemize}
    \item For Experiment 2
    \begin{itemize}
        \item Laptop 1
        \begin{itemize}
            \item CPU:	Intel(R) Core(TM) i5-8265U CPU @ 1.60GHz
            \item Memory: 8GB
            \item OS: Windows 11 Pro
        \end{itemize}
    \end{itemize}
\end{itemize}

\end{document}